\newtheorem{theorem}{Theorem}[section]
\newtheorem{lemma}[theorem]{Lemma}
\newtheorem{proposition}[theorem]{Proposition}
\newtheorem{corollary}[theorem]{Corollary}
\newenvironment{proof}[1][Proof]{\begin{trivlist}
\item[\hskip \labelsep {\bfseries #1}]}{\end{trivlist}}
\newcommand{\qed}{\nobreak \ifvmode \relax \else
      \ifdim\lastskip<1.5em \hskip-\lastskip
      \hskip1.5em plus0em minus0.5em \fi \nobreak
      \vrule height0.75em width0.5em depth0.25em\fi}
\begin{document}

\title{Superpixelizing Binary MRF for Image Labeling Problems}

\author{Junyan Wang and
Sai-Kit Yeung\\
Singapore University of Technology and Design,\\
8 Somapah Road,\\
Singapore, 487372\\
{\tt\small \{junyan\_wang,saikit\}@sutd.edu.sg}
}

\maketitle

\begin{abstract}
{Superpixels have become prevalent in computer vision. They have been used to achieve satisfactory performance at a significantly smaller computational cost for various tasks. People have also combined superpixels with Markov random field (MRF) models. However, it often takes additional effort to formulate MRF on superpixel-level, and to the best of our knowledge there exists no principled approach to obtain this formulation. In this paper, we show how generic pixel-level binary MRF model can be solved in the superpixel space. As the main contribution of this paper, we show that a superpixel-level MRF can be derived from the pixel-level MRF by substituting the superpixel representation of the pixelwise label into the original pixel-level MRF energy.} The resultant superpixel-level MRF energy also remains submodular for a submodular pixel-level MRF. The derived formula hence gives us a handy way to formulate MRF energy in superpixel-level. In the experiments, we demonstrate the efficacy of our approach on several computer vision problems.
\end{abstract}

\section{Introduction}

	
	%

Many computer vision problems can be cast as image labeling problems. Markov random field (MRF) is a general-purpose optimization model for image labeling~\cite{Gemans84MRF,StanLi09MRFbook}. Recent progress on MRF shows its prominent advantages for solving various computer vision and machine learning problems~\cite{Boykov01GraphCut,Boykov2004experimentalGC,Kolmogorov2007non_submodular,Szeliski2008MRFcompare,kappes2013comparative}.

Superpixelization, a.k.a. over-segmentation, is an intuitive yet effective approach to reducing the dimensionality of the image space for computer vision problems~\cite{Vincent1991watersheds,Comaniciu2002meanshift,Shi2000Ncut,Vedaldi2008QuickShift,Levinshtein2009turbopixels,Veksler2010superpixels,Wang2012Vcells,Achanta2012Slic}, and it has been used in combination with MRF~\cite{Zitnick07SPStereo,Fulkerson09class_SPCRF,Vazquez2010multiple,Nowozin2010parameter,Tighe10SuperParsing,Ren2012RGBDlabeling,Khan14SPMRF_RGBD,Rantalankila2014GenProposal}. Superpixels can be used to speed up the image labeling and they often form natural regularization to the labeling problems. However, it often takes significant effort to reformulate the original pixel-level MRF problem into a superpixel-level MRF problem. To the best of our knowledge, there exists no principled approach to obtain the superpixel-level MRF. 


In this paper, we show how to minimize a given generic pixel-level binary MRF energy in the superpixel space. 
To this effect, we first represent pixelwise label by superpixel label. We then substitute this superpixel representation into the pixel-level MRF energy. As the main contribution of this paper, we show that superpixel-level MRF energy can be derived from the pixel-level MRF. In addition, the derived superpixel-level MRF is submodular if the original MRF model is submodular. Fig.~\ref{Final_SPMRF2} illustrates the main idea of this paper. We demonstrate the usefulness of our technique on three representative image labeling problems. 

\begin{figure}
\centering
\vspace{0.6cm}
\includegraphics[width=0.7\columnwidth]{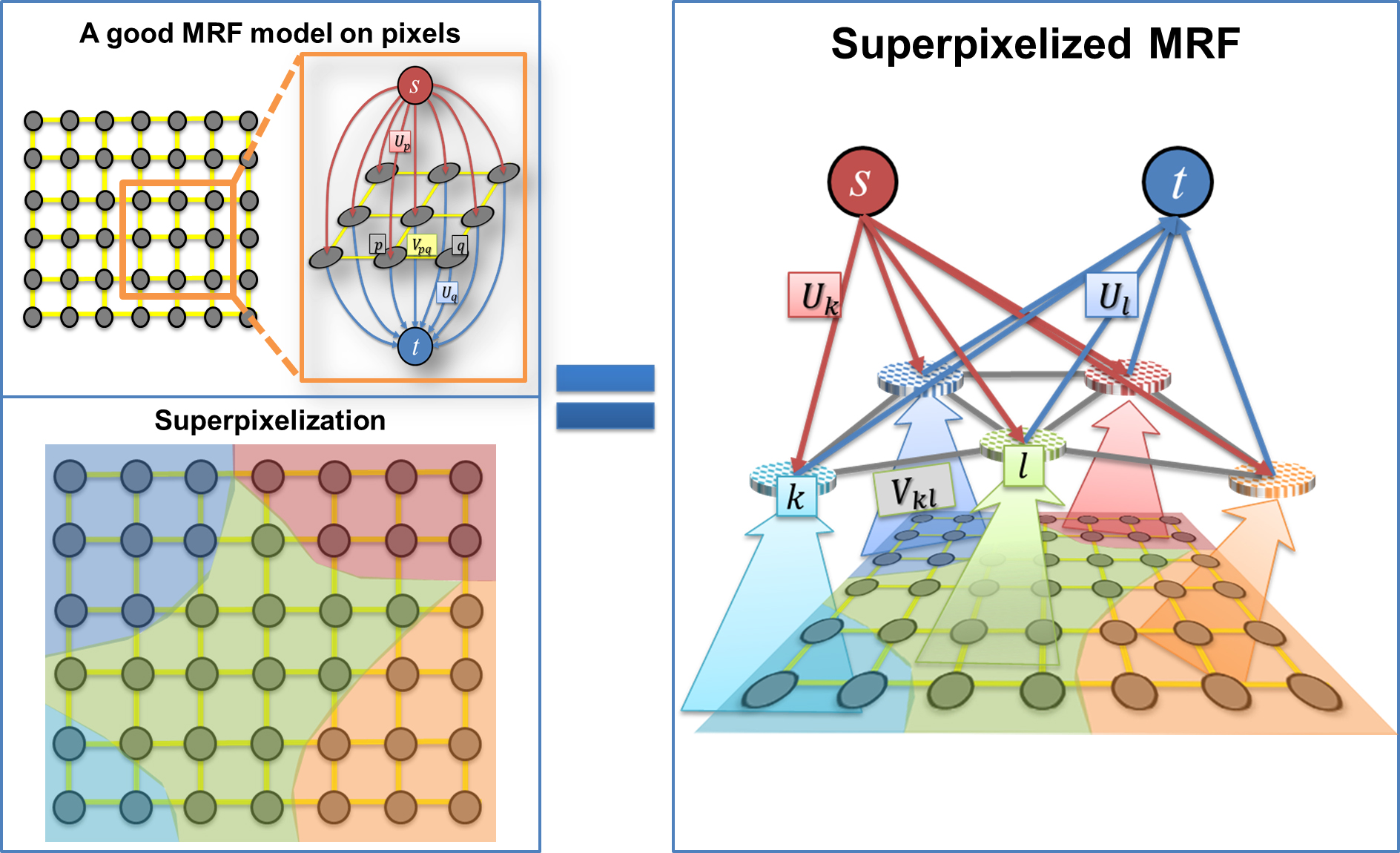}
\caption{Superpixelizing MRF and preserving submodularity (representable via s-t graph). $k$ and $l$ are superpixel indices. $U_k, U_l$ and $V_{kl}$ are the unary and pairwise potentials.}\vspace{-0.5cm}
\end{figure}\label{Final_SPMRF2}
The remaining of this paper is organized as follows. In the next section, we will review the generic form of the second order binary MRF. In section \ref{SEC:D-SPMRF}, we will present the technique that we used to superpixelize the MRF energy. In section \ref{SEC:APP}, we briefly introduce the three applications we considered in this work. In section \ref{SEC:Exp}, we present the experimental results of the respective applications with comparison to the state-of-the-art methods. In section \ref{SEC:Concl}, we conclude the paper and suggest some future works. 


\section{Binary MRF model for image labeling}\label{SEC:BMRF}

The generic second order MRF model can be written as follows:
\begin{equation}\label{eqn:MRF}
\min_f \sum_{p\in \mathcal{P}} U_p(f_p)+\sum_{(p,q)\in \mathcal{N}} V_{pq}(f_p,f_q), 
\end{equation}
where $f_p$ and $f_q$ are the pixel-wise labels over the image, we consider the label values to be either 1 or 0 henceforth. $\mathcal{P}$ is the set of all pixels in the image, and $\mathcal{N}$ is a neighborhood system. $U_p(\cdot)$ is known as the unary term or data-fidelity term. $V_{pq}(\cdot,\cdot)$ is the pairwise potential that is often used to model the pairwise relationship between the labels on neighboring pixels.

For binary-label problem, the unary term can be written more explicitly as
\begin{equation}
U_p(f_p)=\left\{\begin{array}{cc}
          w^1_p, & \hbox{ if } f_p = 1 \\
          w^0_p, & \hbox{ if } f_p = 0 \\
        \end{array}\right.,
\end{equation}
or
\begin{equation}
U_p(f_p) = w^1_p f_p + w^0_p (1-f_p) = (w^1_p - w^0_p) f_p + w^0_p.
\end{equation}

The generic form of the pairwise term can be written as
\begin{equation}
V_{pq}(f_p,f_q)=\left\{\begin{array}{cl}
          w^{00}_{pq}, & \hbox{ if } f_p=f_q=0 \\
          w^{01}_{pq}, & \hbox{ if } f_p=0,~f_q=1\\
          w^{10}_{pq}, & \hbox{ if } f_p=1,~f_q=0\\
          w^{11}_{pq}, & \hbox{ if } f_p=f_q=1\\
        \end{array}\right..
\end{equation}
Thus, $V_{pq}=w^{00}_{pq}\overline{f_p}\overline{f_q}+w^{01}_{pq}\overline{f_p}{f_q}+w_{pq}^{10}{f_p}\overline{f_q}+w_{pq}^{11}{f_p}{f_q}$. 

To sum up, we may rewrite the generic binary label MRF model explicitly as follows:
\begin{equation}\label{EQ:MRF}
\begin{split}
&\min_f \sum_{p\in\mathcal{P}} w_p f_p+\sum_{(p,q)\in \mathcal{N}}\left( w^{00}_{pq}\overline{f_p}\overline{f_q}+w^{01}_{pq}\overline{f_p}{f_q}\right.\\
&\hspace{3.4cm}\left.+w_{pq}^{10}{f_p}\overline{f_q}+w_{pq}^{11}{f_p}{f_q}\right).
\end{split} 
\end{equation}
where $w_p = w^1_p - w^0_p$. Note that we have omitted the constant terms. 

It has been proven in \cite{Kolmogorov04GraphCut} that if $w^{00}_{pq}+w^{11}_{pq}\leq w^{01}_{pq}+w^{10}_{pq}$, the binary labeling problem is submodular and hence can be solved by graph cuts exactly. We will focus on submodular MRF model in this paper. 

\section{Superpixelizing MRF}\label{SEC:D-SPMRF}




\subsection{Superpixel representation of pixel labeling}

\begin{figure}
\centering
\vspace{0.5cm}
\includegraphics[width=0.7\columnwidth]{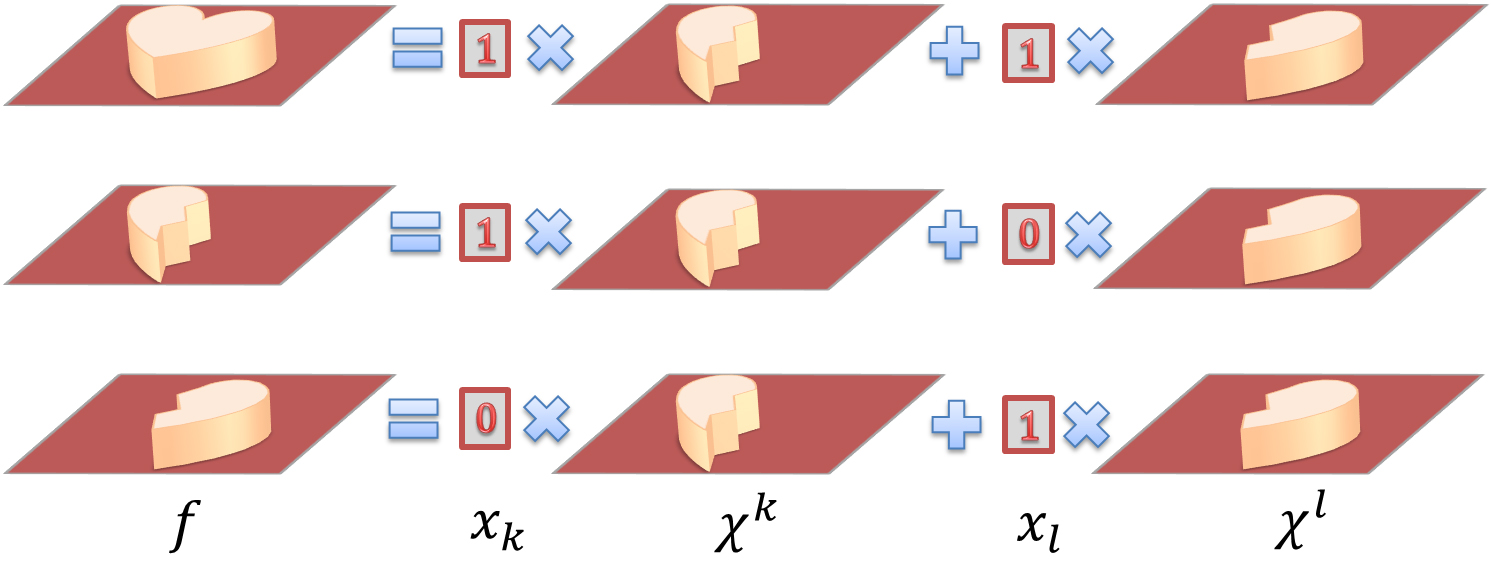}
\caption{Superpixel (cake-cutting) representation of 0-1 labeling}\label{FIG:heart-cake}
\end{figure}

Superpixels are essentially adjacent and non-overlapping image regions. We can denote each superpixel $k$ by one indicator function $\chi^k$ defined on the entire image domain, and the superpixel indicator function $\chi^k$ would satisfy:
\begin{equation}
\chi^k_p=1,\hbox{~if~} p\in\Omega_k,~\hbox{and}~
\chi^k_p\chi^l_p = 0,\hbox{~if~} k\neq l,
\end{equation}
where we concatenated  $\chi^k$ to be $\chi^k_p=\chi^k(\mathbf{z}_p)$, $\mathbf{z}_p$ is the pixel location, and $\Omega_k$ is actually the set of all pixels belonging to the $k$-th superpixel.

Based on the above representation of superpixels, the pixelwise labeling over the image can be represented using the superpixel labels as
\begin{equation}\label{EQ:SPlabel}
f_p = \sum_{k=1}^K x_k\chi^k_p,
\end{equation}
where we considered the concatenated form of pixel labeling $f_{p}=f(\mathbf{z}_p)$, $x_k$ is the superpixel label. This superpixel representation of image labeling is also illustrated in Fig.~\ref{FIG:heart-cake}, where we only consider two superpixels, and the label value is either 0 or 1. 

We now derive some basic properties from this superpixel representation of image labeling. These properties will be useful in the derivation of the superpixel-level MRF energy.
\begin{lemma}\label{LM:uniq_SP}
For any $p\in{\Omega}_l$,  $f_p = x_{l}\chi^{l}_p=x_l$， where $\Omega_l = \{p|\chi^l_{p}=1\}$.
\end{lemma}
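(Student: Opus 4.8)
The plan is to simply unfold the superpixel representation \eqref{EQ:SPlabel} at a pixel $p$ that lies in $\Omega_l$, and then use the non-overlapping property of the superpixel indicators to collapse the sum to its $l$-th term. No induction or genuine case analysis is needed beyond the single split ``$k=l$ versus $k\neq l$''.

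Concretely, I would fix $p\in\Omega_l$. By the defining condition of the indicator functions, $p\in\Omega_l$ means exactly $\chi^l_p=1$. Next, for any index $k\neq l$, the disjointness condition $\chi^k_p\chi^l_p=0$ together with $\chi^l_p=1$ forces $\chi^k_p=0$; here I am implicitly using that each $\chi^k_p$ takes values in $\{0,1\}$, which is part of what it means to be an indicator function. Substituting into \eqref{EQ:SPlabel} then gives $f_p=\sum_{k=1}^{K}x_k\chi^k_p = x_l\chi^l_p + \sum_{k\neq l}x_k\chi^k_p = x_l\cdot 1 + 0 = x_l$, which establishes both claimed equalities $f_p=x_l\chi^l_p$ and $f_p=x_l$ simultaneously.

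The only point that even resembles an obstacle is making sure the hypothesis ``$p\in\Omega_l$'' is the right thing to assume, i.e.\ that every pixel really does belong to (exactly) one superpixel so that membership in some $\Omega_l$ is not vacuous; but this is guaranteed by the standing assumption that the superpixels form an over-segmentation, i.e.\ a partition, of the image domain, as stated when the representation was introduced. Hence the statement is essentially a one-line consequence of disjointness and the $\{0,1\}$-valuedness of the $\chi^k$, and I do not expect any real difficulty.
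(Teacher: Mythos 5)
Your proposal is correct and follows essentially the same route as the paper's proof: both collapse the sum $\sum_k x_k\chi^k_p$ to its $l$-th term using the disjointness condition $\chi^k_p\chi^l_p=0$ and the fact that $\chi^l_p=1$ on $\Omega_l$. The only cosmetic difference is that the paper first writes $f_p=f_p\chi^l_p$ and distributes the product, whereas you deduce $\chi^k_p=0$ for $k\neq l$ directly; the content is identical.
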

\if 0
\begin{proof}[Proof of lemma \ref{LM:uniq_SP}]
Let's consider $f_p$ defined in Eq. (\ref{EQ:SPlabel}). We will have 
\begin{equation}
f_p = f_p\chi^l_p,\hbox{ for } p\in{\Omega}_l.
\end{equation}
Substituting Eq. (\ref{EQ:SPlabel}) into the above, we will have for any $p\in{\Omega}_l$
\begin{equation}
f_p = \sum_{k=1}^{K} x_k \underbrace{\chi^k_p\chi^l_p}_{=0,~if~k\neq l}= x_l \chi^l_p.
\end{equation}
Note that $\chi^l_p=1$ for any $p\in{\Omega}_l$, $f_p = x_l$. This completes the proof.\qed
\end{proof}
\fi

The above lemma implies the following property.
\begin{corollary}\label{Col:invL}
$\overline{f_p} = \sum_{k=1}^{K} \overline{x_k}\chi^k_p$
\end{corollary}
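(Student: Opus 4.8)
The plan is to derive Corollary~\ref{Col:invL} directly from the superpixel representation of $f_p$ together with the partition property of the indicator functions $\chi^k$, exactly as the preceding lemma was proved. The key observation is that the superpixels tile the image domain, so that $\sum_{k=1}^{K}\chi^k_p = 1$ for every pixel $p$: indeed, each $p$ lies in exactly one $\Omega_l$, and by definition $\chi^k_p = 1$ iff $p\in\Omega_k$, so exactly one term in the sum is $1$ and the rest are $0$. This "partition of unity" fact is the workhorse.

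First I would write $\overline{f_p} = 1 - f_p$ and substitute the partition-of-unity identity for the constant $1$, giving
\begin{equation}
\overline{f_p} = 1 - f_p = \sum_{k=1}^{K}\chi^k_p - \sum_{k=1}^{K} x_k\chi^k_p = \sum_{k=1}^{K}(1-x_k)\chi^k_p = \sum_{k=1}^{K}\overline{x_k}\,\chi^k_p,
\end{equation}
where the second equality uses Eq.~(\ref{EQ:SPlabel}) and the third collects the two sums term by term. That is the whole argument. Alternatively, one can invoke Lemma~\ref{LM:uniq_SP} pointwise: for $p\in\Omega_l$ we have $f_p = x_l$, hence $\overline{f_p} = \overline{x_l} = \overline{x_l}\chi^l_p = \sum_{k}\overline{x_k}\chi^k_p$ since all other $\chi^k_p$ vanish on $\Omega_l$; ranging over all $l$ covers every pixel. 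Either route is a two-line computation.

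The only place that requires a moment's care — and the closest thing to an "obstacle" — is justifying $\sum_{k}\chi^k_p = 1$, i.e. that the superpixels genuinely cover the whole image and not merely that they are disjoint. The stated properties of $\chi^k$ in the excerpt give disjointness ($\chi^k_p\chi^l_p = 0$ for $k\neq l$) and that $\chi^k_p = 1$ on $\Omega_k$, and the text explicitly describes superpixels as "adjacent and non-overlapping image regions" whose union is the image (the cake-cutting picture of Fig.~\ref{FIG:heart-cake}), so the covering property is part of the setup; I would simply state it as such. Once that is in hand, the corollary is immediate, which is presumably why the authors relegated it to a corollary with the proof suppressed.
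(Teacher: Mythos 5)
Your proposal is correct, and your second route (applying Lemma~\ref{LM:uniq_SP} pointwise on each $\Omega_l$ to get $\overline{f_p}=\overline{x_l}=\overline{x_l}\chi^l_p$ and then assembling over all superpixels) is exactly the paper's own proof of Corollary~\ref{Col:invL}. Your primary partition-of-unity computation $1-f_p=\sum_k\chi^k_p-\sum_k x_k\chi^k_p$ is an equivalent one-line variant resting on the same covering/disjointness facts, and you rightly flag that the covering property $\sum_k\chi^k_p=1$ is part of the setup rather than something to be proved.
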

\if 0
\begin{proof}[Proof of corollary \ref{Col:invL}]
According to Lemma \ref{LM:uniq_SP}, we have for any $p\in{\Omega}_l$, $f_p =x_l$. Thus $\overline{f_p} = \overline{x_{l}} =\overline{x_{l}}\chi^{l}_p$, for any $p\in{\Omega}_l$. Thus for all $p\in\mathcal{P}$, we will have $\overline{f_p}=\sum_{k=1}^{K}\overline{x_k}\chi^k_p$. \qed
\end{proof}
\fi 

We defer their proofs to Appendix. 

\subsection{The derivation of superpixel-level MRF} 

With the superpixel representation of image labeling, we are able to write down a naive form of superpixelized MRF energy minimization problem:
\begin{equation}
\begin{split}
\min_{f,\mathbf{x}} &~E_1(f)+E_2(f)\\
\hbox{s.t.: }&f_p = \sum_{k=1}^K x_k\chi_p^k,~p\in\mathcal{P},
\end{split}
\end{equation}
where $E_1$ and $E_2$ are the total unary and pairwise potential terms in the original MRF energy, and $\mathbf{x} = \{x_k|k=1,2,...,K\}$ is the set of all superpixel labels.

This problem is equivalent to
\begin{equation}\label{EQ:PMRF_SP}
\min_{\mathbf{x}} E_1\left(\sum_{k=1}^K x_k\chi_p^k\right)+E_2\left(\sum_{k=1}^K x_k\chi_p^k\right).
\end{equation}
The above discrete optimization problems may appear to be difficult to solve.  

The main contribution of this paper can be written as a proposition as follows.
\begin{proposition}\label{PROP:main}
Given that $f_p = \sum_{k=1}^K x_k\chi^k_p$, the energy in Eq. (\ref{EQ:PMRF_SP}) can be written as an MRF energy defined on superpixels, namely
\begin{equation}\label{EQ:SPMRF}
\begin{split}
&\sum_{k=1}^K  \hat{\omega}_kx_k+\hspace{-0.2cm}\sum_{\{k=1,l=1|k\neq l\}}^{K,K}\big(\omega^{00}_{kl}\overline{x_k}\overline{x_l}+\omega_{kl}^{01}{x_k}\overline{x_l}\\
&\hspace{3.3cm}+\omega_{kl}^{10}{x_k}\overline{x_l}+\omega_{kl}^{11}{x_k}{x_l}\big),\\
\end{split}
\end{equation}
where the first term is the unary term, i.e. $U_k$, and the second term is the pairwise potential, i.e. $V_{kl}$. $\hat{\omega}_k = \omega_k - \omega^{00}_{k}+ \omega^{11}_{k}$,  $\omega_k = \sum_p w_p\chi^k_p$, $\omega_{k}^{mn} = \sum\limits_{(p,q)\in \mathcal{N}} w^{mn}_{pq} \chi_p^k \chi_q^k$, and $\omega_{kl}^{mn} = \sum\limits_{(p,q)\in \mathcal{N}} w^{mn}_{pq} \chi_p^k \chi_q^l$, $(m,n)\in \{0,1\}$. 
\end{proposition}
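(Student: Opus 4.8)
The plan is to substitute the superpixel representation $f_p=\sum_{k=1}^K x_k\chi^k_p$ directly into the expanded pixel-level energy \eqref{EQ:MRF}, collect terms, and read off the coefficients. The unary and pairwise parts are handled separately, and within the pairwise part the edges of $\mathcal N$ are split into those that lie inside a single superpixel and those that straddle two distinct superpixels. For the unary part, substitution into $\sum_p w_p f_p$ followed by exchanging the order of summation gives $\sum_k\bigl(\sum_p w_p\chi^k_p\bigr)x_k=\sum_k\omega_k x_k$, so this first step contributes $\omega_k$ to the coefficient of $x_k$.

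For the pairwise part I would first apply Corollary~\ref{Col:invL} to write $\overline{f_p}=\sum_k\overline{x_k}\chi^k_p$, so that each of the four monomials appearing in $V_{pq}$ expands into a double sum over superpixel indices, e.g. $\overline{f_p}\,\overline{f_q}=\sum_{k,l}\overline{x_k}\,\overline{x_l}\,\chi^k_p\chi^l_q$, and analogously for $\overline{f_p}f_q$, $f_p\overline{f_q}$ and $f_pf_q$. Plugging these into $\sum_{(p,q)\in\mathcal N}(\cdots)$ and swapping the order of summation converts the sum over $\mathcal N$ into coefficients $\sum_{(p,q)\in\mathcal N}w^{mn}_{pq}\chi^k_p\chi^l_q$ attached to the superpixel monomials. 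I then split the sum over index pairs into the off-diagonal part $k\neq l$ and the diagonal part $k=l$. The off-diagonal part is exactly $\sum_{k\neq l}\bigl(\omega^{00}_{kl}\overline{x_k}\,\overline{x_l}+\omega^{01}_{kl}\overline{x_k}x_l+\omega^{10}_{kl}x_k\overline{x_l}+\omega^{11}_{kl}x_kx_l\bigr)$, i.e. the pairwise term $V_{kl}$ claimed in \eqref{EQ:SPMRF}.

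The remaining work is the diagonal ($k=l$) contribution, which is where the correction to the unary coefficient originates. Here I would exploit that labels are binary: $x_k\overline{x_k}=0$ annihilates the $w^{01}$ and $w^{10}$ contributions, while $x_k^2=x_k$ and $\overline{x_k}^{\,2}=\overline{x_k}$ collapse the $w^{00}$ and $w^{11}$ contributions into linear terms, leaving $\sum_k\bigl(\omega^{00}_k\overline{x_k}+\omega^{11}_k x_k\bigr)$. Substituting $\overline{x_k}=1-x_k$ and discarding the additive constant $\sum_k\omega^{00}_k$ (exactly as constants were dropped at the pixel level) turns this into $\sum_k(\omega^{11}_k-\omega^{00}_k)x_k$. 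Adding this to the $\sum_k\omega_k x_k$ from the unary step produces the coefficient $\hat\omega_k=\omega_k-\omega^{00}_k+\omega^{11}_k$, and together with the off-diagonal pairwise term this is precisely \eqref{EQ:SPMRF}.

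The main obstacle is this diagonal bookkeeping rather than any deep idea: one must remember to include the intra-superpixel edges, apply the idempotency identities $x_k^2=x_k$, $\overline{x_k}^{\,2}=\overline{x_k}$, $x_k\overline{x_k}=0$ in the right places, and correctly track the constant that gets absorbed into the energy. A minor point to verify along the way is that the substitution $\overline{f_p}=\sum_k\overline{x_k}\chi^k_p$ relies on the superpixels forming a partition of the image (so that $\sum_k\chi^k_p=1$), which is what licenses Corollary~\ref{Col:invL}; all the other manipulations are routine re-indexing and interchange of finite sums.
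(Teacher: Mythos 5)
Your proposal is correct and follows essentially the same route as the paper's own proof: substitute the superpixel representation (using Corollary~\ref{Col:invL} for the complemented labels), interchange the sums, split the superpixel index pairs into diagonal and off-diagonal parts, and use $x_k\overline{x_k}=0$ together with idempotency and $\overline{x_k}=1-x_k$ to fold the intra-superpixel edge contributions into the unary coefficient $\hat\omega_k=\omega_k-\omega^{00}_k+\omega^{11}_k$ up to an additive constant. Your added remark that Corollary~\ref{Col:invL} rests on the superpixels partitioning the image is a fair observation the paper leaves implicit, but it does not change the argument.
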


The proof of this proposition is deferred to the Appendix. Eq.~\ref{EQ:SPMRF} gives us a formula which relates the MRF energy between superpixel and pixel explicitly. With this formula we can build the MRF for superpixels using the MRF in pixel level regardless of the underlying applications. To understand the resultant pairwise potential more in-depth, we elaborate on the relationship between the pairwise potentials before and after superpixelization. 

According to Eq. (\ref{EQ:MRF}), the pairwise potential for the pixel-level MRF can be written as:
\begin{equation}\label{EQ:PPbfSP}
V_{pq} = w^{00}_{pq}\overline{f_p}\overline{f_q}+w^{01}_{pq}\overline{f_p}{f_q}+w^{10}_{pq}{f_p}\overline{f_q}+w^{11}_{pq}{f_p}{f_q}
\end{equation}

Likewise, the pairwise potential for the superpixel-level MRF can be written as: 
\begin{equation}\label{EQ:PPafSP}
V_{kl} = \omega^{00}_{kl}\overline{x_k}\overline{x_l}+\omega^{01}_{kl}\overline{x_k}{x_l}+\omega^{10}_{kl}{x_k}\overline{x_l}+w^{11}_{kl}{x_k}{x_l}.
\end{equation}
\vspace{-0.5cm}
\begin{corollary}\label{Col:V}
Given the pairwise potentials defined in Eq. (\ref{EQ:PPbfSP}) and Eq. (\ref{EQ:PPafSP}), we have the following relationship between them:
\begin{equation}
V_{kl} = \sum_{\{p,q\}\in\mathcal{N}} V_{pq},~ \hbox{for~} p\in\Omega_k~\hbox{and}~ q\in\Omega_l,~k\neq l
\end{equation}
where $\Omega_k$ and $\Omega_l$ are different superpixels.
\end{corollary}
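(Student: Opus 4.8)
The plan is to prove Corollary~\ref{Col:V} by direct substitution, using Lemma~\ref{LM:uniq_SP} and Corollary~\ref{Col:invL} to express the pixelwise labels appearing in $V_{pq}$ in terms of superpixel labels, and then collecting the resulting coefficients via the definition of $\omega_{kl}^{mn}$ given in Proposition~\ref{PROP:main}.

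First I would fix $k\neq l$ and take any neighboring pair $(p,q)\in\mathcal{N}$ with $p\in\Omega_k$ and $q\in\Omega_l$. By Lemma~\ref{LM:uniq_SP}, $f_p=x_k$ and $f_q=x_l$, and by Corollary~\ref{Col:invL}, $\overline{f_p}=\overline{x_k}$ and $\overline{f_q}=\overline{x_l}$. Substituting these four identities into the pixel-level pairwise potential in Eq.~(\ref{EQ:PPbfSP}) turns $V_{pq}$ into
\begin{equation}
V_{pq} = w^{00}_{pq}\overline{x_k}\,\overline{x_l} + w^{01}_{pq}\overline{x_k}\,{x_l} + w^{10}_{pq}{x_k}\,\overline{x_l} + w^{11}_{pq}{x_k}\,{x_l},
\end{equation}
valid for every such pair.

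Next I would sum this identity over all $(p,q)\in\mathcal{N}$ with $p\in\Omega_k$ and $q\in\Omega_l$. Since the superpixel labels $x_k,x_l$ do not depend on $(p,q)$, they factor out of the sum. Moreover, $\chi^k_p\chi^l_q$ equals $1$ exactly when $p\in\Omega_k$ and $q\in\Omega_l$, and $0$ otherwise, so the restricted sum of each coefficient equals $\sum_{(p,q)\in\mathcal{N}} w^{mn}_{pq}\chi^k_p\chi^l_q = \omega^{mn}_{kl}$, which is precisely the coefficient defined in Proposition~\ref{PROP:main}. Hence
\begin{equation}
\sum_{\substack{(p,q)\in\mathcal{N}\\ p\in\Omega_k,\,q\in\Omega_l}} V_{pq} = \omega^{00}_{kl}\overline{x_k}\,\overline{x_l} + \omega^{01}_{kl}\overline{x_k}\,{x_l} + \omega^{10}_{kl}{x_k}\,\overline{x_l} + \omega^{11}_{kl}{x_k}\,{x_l} = V_{kl},
\end{equation}
where the last equality is Eq.~(\ref{EQ:PPafSP}). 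This finishes the proof.

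I do not expect a genuine obstacle: the statement is essentially bookkeeping built on top of the two preceding results. The only point requiring care is the hypothesis $k\neq l$, which ensures that no pixel lies in both $\Omega_k$ and $\Omega_l$; this is what lets the two substitutions be applied to distinct superpixel labels and keeps the cross term $\chi^k_p\chi^l_q$ from collapsing onto a diagonal ($k=l$) contribution. Equivalently, Corollary~\ref{Col:V} can be read off directly from the proof of Proposition~\ref{PROP:main}, where the superpixel-level pairwise potential is assembled by grouping the pixel-level pairwise terms according to which pair of superpixels their endpoints fall into; the present corollary simply isolates the $k\neq l$ block of that grouping.
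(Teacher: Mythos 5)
Your proof is correct and follows essentially the same route as the paper's: substitute $f_p=x_k$, $f_q=x_l$ (via Lemma~\ref{LM:uniq_SP} and Corollary~\ref{Col:invL}) into $V_{pq}$, sum over the pairs in $\mathcal{N}$ with $p\in\Omega_k$, $q\in\Omega_l$, and identify the resulting coefficients with $\omega^{mn}_{kl}$ from Proposition~\ref{PROP:main}. The only difference is that you substitute before summing while the paper sums first, which is immaterial.
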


We illustrate the construction of the pairwise potential in Fig. \ref{Fig:V}. In addition, to ensure the solvability of the resultant problem, it is important to maintain the submodularity of the superpixel-level MRF model. We find that the derived superpixel-level MRF is indeed submodular if the original pixel-level MRF is submodular.
\begin{figure}[!t]
\centering
\vspace{0.5cm}
\includegraphics[width=0.7\columnwidth]{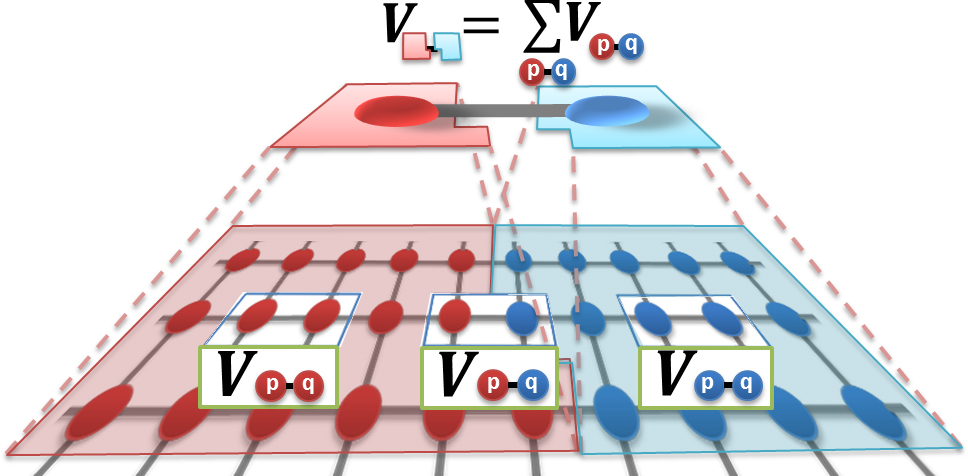}\\\vspace{5pt}
\caption{Visualization of the relationship between the pairwise potential of superpixel-level and pixel level in corollary \ref{Col:V}. Note that the other two types of pixel-level pairwise potentials will contribute to the superpixel-level unary term as shown in proposition \ref{PROP:main}.}\label{Fig:V}
\end{figure}

\begin{proposition}\label{Prop:Regularity}
If the pairwise potential satisfies the regularity inequality, namely
\begin{equation}\label{EQ:regineq}
w^{00}_{pq}+w^{11}_{pq}\leq w^{01}_{pq}+w^{10}_{pq}, 
\end{equation}
then the following inequality holds as well.
\begin{equation}
\omega^{00}_{kl}+\omega^{11}_{kl}\leq \omega^{01}_{kl}+\omega^{10}_{kl}, 
\end{equation}
\end{proposition}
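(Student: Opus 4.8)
The plan is to reduce the superpixel-level regularity inequality to the pixel-level one \emph{term by term}, using the explicit formula $\omega_{kl}^{mn} = \sum_{(p,q)\in \mathcal{N}} w^{mn}_{pq}\chi_p^k\chi_q^l$ supplied by Proposition~\ref{PROP:main}. The key observation is that, for a fixed pair of distinct superpixels $k\neq l$, each of the four weights $\omega^{00}_{kl},\omega^{01}_{kl},\omega^{10}_{kl},\omega^{11}_{kl}$ is a linear combination of the corresponding pixel-level weights $\{w^{mn}_{pq}\}$ with one and the same set of coefficients $\chi_p^k\chi_q^l$, and these coefficients are nonnegative because $\chi_p^k,\chi_q^l\in\{0,1\}$.

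Carrying this out, I would first substitute the formula for each $\omega^{mn}_{kl}$ and collect terms to obtain
\begin{equation*}
\big(\omega^{01}_{kl}+\omega^{10}_{kl}\big)-\big(\omega^{00}_{kl}+\omega^{11}_{kl}\big)
=\sum_{(p,q)\in\mathcal{N}}\big(w^{01}_{pq}+w^{10}_{pq}-w^{00}_{pq}-w^{11}_{pq}\big)\,\chi_p^k\chi_q^l .
\end{equation*}
By the hypothesis in Eq.~(\ref{EQ:regineq}), every bracketed factor $w^{01}_{pq}+w^{10}_{pq}-w^{00}_{pq}-w^{11}_{pq}$ is $\ge 0$, and every coefficient $\chi_p^k\chi_q^l$ is $\ge 0$; hence each summand is nonnegative, the whole sum is nonnegative, and rearranging yields exactly $\omega^{00}_{kl}+\omega^{11}_{kl}\le \omega^{01}_{kl}+\omega^{10}_{kl}$.

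I do not expect any real obstacle here: the statement is essentially the fact that a nonnegative linear combination preserves an inequality that holds entrywise. The only points worth stating carefully are (i) that the identical weights $\chi_p^k\chi_q^l$ multiply all four families $\omega^{00},\omega^{01},\omega^{10},\omega^{11}$, so the ``regularity gap'' is not distorted under superpixelization, and (ii) that only the edges $(p,q)\in\mathcal{N}$ with $p\in\Omega_k$ and $q\in\Omega_l$ actually contribute to the sum, since all other terms vanish as $\chi_p^k\chi_q^l=0$ — which is consistent with the interpretation of $V_{kl}$ given in Corollary~\ref{Col:V}.
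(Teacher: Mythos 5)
Your proof is correct and is essentially the paper's own argument: the paper multiplies the pixel-level regularity inequality \eqref{EQ:regineq} by the nonnegative coefficient $\chi_p^k\chi_q^l$, sums over $(p,q)\in\mathcal{N}$, and invokes the definitions of $\omega^{mn}_{kl}$, which is exactly your ``nonnegative linear combination preserves an entrywise inequality'' step written as a single collected sum. No gaps.
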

The proof of this proposition is deferred to the Appendix.

\if 0
\begin{proof}[Proof of proposition \ref{Prop:Regularity}]
Let us multiply each term of Eq. (\ref{EQ:regineq}) with $\chi_p^k\chi_q^l$, which is non-negative. We will have for any $(p,q)\in\mathcal{N}$,
\begin{equation}
w^{00}_{pq}\chi_p^k\chi_q^l+w^{11}_{pq}\chi_p^k\chi_q^l\leq w^{01}_{pq}\chi_p^k\chi_q^l+w^{10}_{pq}\chi_p^k\chi_q^l.
\end{equation}
If we further sum each term over all the $(p,q)\in\mathcal{N}$ together, we will have
\begin{equation}
\begin{split}
&\sum\limits_{(p,q)\in \mathcal{N}}(w^{00}_{pq}\chi_p^k\chi_q^l+w^{11}_{pq}\chi_p^k\chi_q^l)\\
&\leq \sum\limits_{(p,q)\in \mathcal{N}}(w^{01}_{pq}\chi_p^k\chi_q^l+w^{10}_{pq}\chi_p^k\chi_q^l).
\end{split}
\end{equation}
By definition of $\omega^{00}_{kl}$,~$\omega^{11}_{kl}$, $\omega^{01}_{kl}$, and $\omega^{10}_{kl}$, the above completes the proof.\qed
\end{proof}
\fi 

Comparing with the original MRF model in Eq. (\ref{EQ:MRF}), the superpixel MRF in Eq. (\ref{EQ:SPMRF}) requires significantly smaller graph for the same problem. 
\if 0
\subsection{What is lost in the MRF superpixelization?}
In this subsection, we are interested in the underlying pixel-level MRF model corresponding to the superpixel-level MRF. Note that we derive the superpixel-level MRF energy by substituting the superpixel representation of image label into the pixel-level MRF, but we did not show if the energy can be represented as a pixel-level MRF energy. In the following, we will show that the derived superpixel-level energy can be represented as a pixel-level MRF energy again. By comparing this pixel-level superpixlelized MRF energy with the original pixel-level MRF energy, we can understand more about the difference in the solutions from Eq. (\ref{EQ:SPMRF}) and Eq. (\ref{EQ:MRF}).

First we can rewrite Eq. (\ref{EQ:SPMRF}) by substituting the explicit forms of $\hat{\omega}_k$, $\omega^{00}_{kl}$,~$\omega^{11}_{kl}$, $\omega^{01}_{kl}$, and $\omega^{10}_{kl}$ into the energy as follows.
\begin{equation}
\begin{split}
&\sum_{k=1}^K  \hat{\omega}_kx_k+\hspace{-0.2cm}\sum_{\{k=1,l=1|k\neq l\}}^{K,K}\big(\omega^{00}_{kl}\overline{x_k}\overline{x_l}+\omega_{kl}^{01}{x_k}\overline{x_l}\\
&\hspace{3.3cm}+\omega_{kl}^{10}{x_k}\overline{x_l}+\omega_{kl}^{11}{x_k}{x_l}\big),\\
=&\sum_{k=1}^K\sum_{p\in\mathcal{P}}w_p\chi^k_p x_k\\
&-\sum_{k=1}^K\sum_{\{p,q\}\in\mathcal{N}} w^{00}_{pq}\chi^k_p\chi^k_q x_kx_k + \sum_{k\neq l}^{K,K}\sum_{\{p,q\}\in\mathcal{N}}w^{00}_{pq}\chi^k_p\chi^l_q \overline{x_k}\overline{x_l}\\
&+\sum_{k=1}^K\sum_{\{p,q\}\in\mathcal{N}} w^{11}_{pq}\chi^k_p\chi^k_q x_kx_k + \sum_{k\neq l}^{K,K}\sum_{\{p,q\}\in\mathcal{N}} w^{11}_{pq}\chi^k_p\chi^k_q x_kx_l\\
&+\sum_{k\neq l}^{K,K}\sum_{\{p,q\}\in\mathcal{N}} w^{01}_{pq}\chi^k_p\chi^l_q \overline{x_k}x_l\\
&+\sum_{k\neq l}^{K,K}\sum_{\{p,q\}\in\mathcal{N}} w^{10}_{pq}\chi^k_p\chi^l_q x_k\overline{x_l}\\
\end{split}
\end{equation}

\fi 

\subsection{Superpixelizing the Potts model}
One common form of binary MRF energy is the Potts model as follows:
\begin{equation}\label{EQ:Potts}
\min_f \sum_p w_p f_p+\sum_{(p,q)\in \mathcal{N}} w_{pq}|f_p-f_q|^2. 
\end{equation} 

We are particularly interested in the superpixel energy form of the above energy. First, we can rewrite the energy in the general form as in Eq. (\ref{EQ:MRF}). Let $E_2^{Potts}=\sum_{(p,q)\in N} w_{pq}|f_p-f_q|^2$, we have:
\begin{equation}\label{EQ:E2potts}
\begin{split}
E_2^{Potts} &= \sum_{(p,q)\in\mathcal{N}} w_{pq}|f_p-f_q|^2\\
& = \sum_{(p,q)\in \mathcal{N}}\Big( w_{pq}f_p\overline{f_q}+w_{pq}\overline{f_p}{f_q}\Big).
\end{split}
\end{equation}

Thus, the corresponding superpixel MRF is the following:
\begin{equation}\label{EQ:SP_potts}
\begin{split}
&\min_{\mathbf{x}} \sum_{k=1}^K \omega_kx_k+\sum_{k=1,l=1}^{K,K}\Big(\omega_{kl}{x_k}\overline{x_l}+\omega_{kl}\overline{x_k}{x_l}\Big)\\
\Leftrightarrow&\min_{\mathbf{x}} \sum_{k=1}^K \omega_kx_k+\sum_{k\neq l}^{K,K}\Big(\omega_{kl}{x_k}\overline{x_l}+\omega_{kl}\overline{x_k}{x_l}\Big)\\
\Leftrightarrow&\min_{\mathbf{x}} \sum_{k=1}^K \omega_kx_k+\sum_{k\neq l}^{K,K}\omega_{kl}|{x_k}-{x_l}|^2,
\end{split}
\end{equation}
where $w^{00}_{pq}=w^{11}_{pq}=0$, $\omega_k=\sum_{p}w_p\chi_p^k$ and $w_{kl}=\sum_{pq}w_{pq}\chi_p^k\chi_q^l$.

\subsection{Superpixel MRF for segmentation with detected edges}
It has been shown that the segmentation with an MRF model can be made very effective for object segmentation if the detected edge is incorporated in the model~\cite{ajayTPAMI12SegFixation}. The main contribution in their model is using edge map to form the pairwise potential in the Potts model as follows:
\begin{equation}\label{EQ:edgeMRF}
V_{pq}(f_p,f_q) =  w^{e}_{pq} \big|f_p-f_q\big|^2,
\end{equation}
where $f_p$ and $f_q$ are the label variables, they are either 0 or 1, and $w^{e}_{pq}$ is defined as:
\begin{equation}\label{EQ:weq}
w^{e}_{pq}=\left\{\begin{array}{lr}
                    \exp(-5I_e(p,q)),~ & I_e(p,q)\neq0 \\
                    20,~ & \hbox{Otherwise}
                  \end{array},
\right.
\end{equation}
in which $I_e(p,q)$ is 1 if either $p$ or $q$ is on edge~\cite{Arbelaez2011ContourDetectSeg,Dollar2013structured}.

As their method targets at automatic object segmentation, computational efficiency is a critical concern. We propose to superpixelize their MRF energy to gain similar performance of segmentation at a much smaller computational cost. Note that it is also not straightforward to reasonably incorporate edge detection in an MRF defined on superpixels. 


Interestingly, Ren et al. \cite{Ren2012RGBDlabeling} proposed a superpixel MRF with detected edge. Nevertheless, the explicit relationship between the superpixel-level and pixel-level pairwise potential was not given. Thus, the optimal formulation for this term may be obscure. With our superpixelization formula for Potts model established in Eq. (\ref{EQ:SP_potts}), the explicit form of the pairwise potential for the edge based superpixel MRF can be easily derived from Eq. (\ref{EQ:edgeMRF}).   

\section{Applications}\label{SEC:APP}
In this section, we briefly review the applications that we considered in the experimental evaluation.

\subsection{Interactive image segmentation}
Interactive image segmentation is a typical application of MRF model~\cite{BoykovJolly01GMM-MRF}. It has been successfully incorporated in the system of image cutout~\cite{lazysnapping,Rother04GrabCut}. The image cutout is now composed of three components, object masking, boundary editing and alpha-matting~\cite{lazysnapping}. In this paper, we consider the basic module of an interactive segmentation system, i.e. the box and seeds controlled object masking. Recent developments on MRF model based interactive image segmentation are mainly focused on the unary term~\cite{chen2012adaptive,TangMeng13OneCut,Wu14MILCut}. Since the superpixelization of the unary term is relatively straightforward, in this work we consider the effectiveness of our MRF superpixelization for the state-of-the-art pairwise potential~\cite{ajayTPAMI12SegFixation}.  

\subsection{Segmentation propagation in video cutout}
Interactive video cutout is a useful tool in video editing and compositing~\cite{Wang05IVC_SIGGRAPH,Bai09VideoSnapCut_SIGGRAPH,Zhong2012UDC_SIGGRAPHAsia}. It usually begins with an interactive key frame segmentation, followed by segmentation propagation. The segmentation propagation step automatically generates segmentations of the subsequent frames by motion estimation, foreground-background classification and MRF based optimization. On the one hand, since the video cutout is usually a tedious work, the efficiency of the segmentation propagation step is crucial to the usability of such system. On the other hand, accuracy is of utmost importance in video cutout. In other words, the computational cost should not be reduced at any cost of accuracy. We propose to superpixelize the original MRF model in segmentation propagation to safely reduce the computational cost. 

\subsection{Automatic segmentation proposal generation}
Automatic segmentation proposal generation is a relatively new topic in computer vision~\cite{Carreira2010parametricGC,Endres2010ObjProp,Van2011segmentation,Rantalankila2014GenProposal}. It aims to integrate the object detection with object segmentation. The main idea is to generate a pool of segmentation results, as a substitute to sliding windows, to feed into the object detector. The major challenge is that this method can result in very high computational cost. Normally, thousands of proposals will be generated for each image to ensure a satisfactory recall~\cite{Rantalankila2014GenProposal}. Although superpixels have been adopted to reduce the computational burden in the existing frameworks, the relationship between the formulated MRF models for segmentation with superpixels and advanced celebrated pixel-level MRF models \cite{ajayTPAMI12SegFixation,Veksler08StarShapePrior,Kumar06DRF} remains mysterious. The state-of-the-art segmentation methods are generally working on pixel level~\cite{ajayTPAMI12SegFixation,TangMeng13OneCut}. Thus, we propose to superpixelize the existing state-of-the-art pixel-level MRF, such as in~\cite{ajayTPAMI12SegFixation}, for generating object proposals. Witnessing the effectiveness of the pixel-level MRF models, we can expect the similarly successful object proposal generation with the superpixelized MRF.

\section{Experiments}\label{SEC:Exp}

In this section, we evaluate our method for the aforementioned applications. The methods are implemented using MATLAB. We will release our code and datasets upon acceptance. For preprocessing, we adopt a classic edge detection method~\cite{Martin2004LearnEdgeDetect} and a popular superpixelization method~\cite{Levinshtein2009turbopixels}. 

\begin{figure*}
\begin{center}
\vspace{0.1cm}
\begin{tabular}
{
@{\hspace{0mm}}c@{\hspace{1mm}}c@{\hspace{1mm}}c @{\hspace{1mm}}c
@{\hspace{1mm}}c@{\hspace{1mm}}c@{\hspace{1mm}}c @{\hspace{1mm}}c
}
\includegraphics[width=0.22\textwidth]{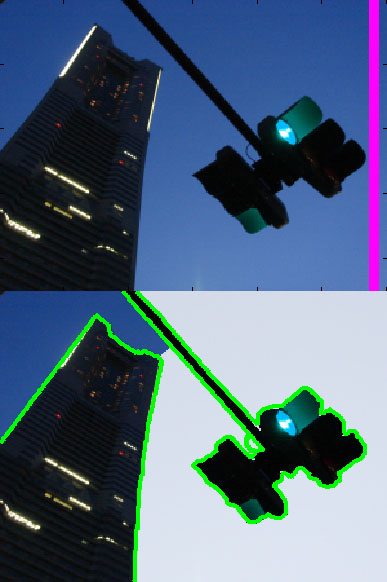} &
 \includegraphics[width=0.22\textwidth]{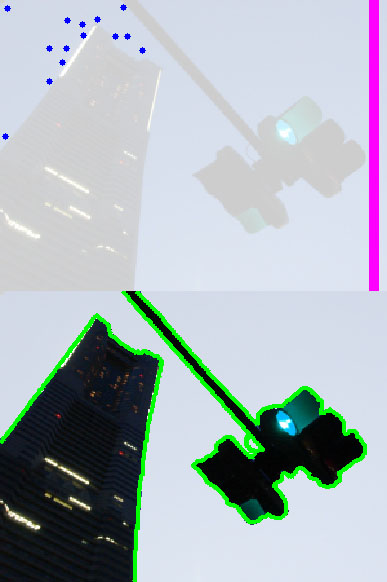}&
 \includegraphics[width=0.22\textwidth]{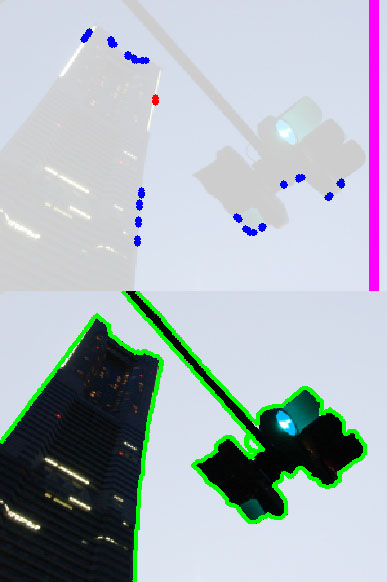}&
 \includegraphics[width=0.22\textwidth]{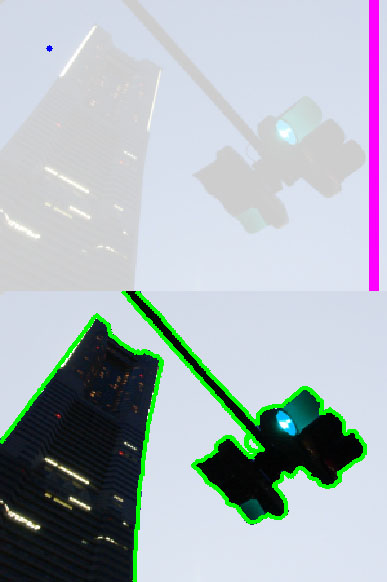}\\
\includegraphics[width=0.22\textwidth]{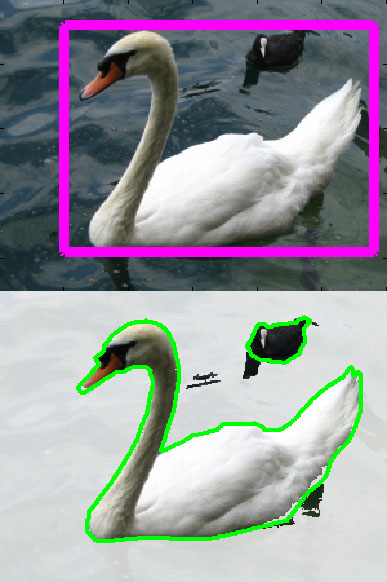} &
 \includegraphics[width=0.22\textwidth]{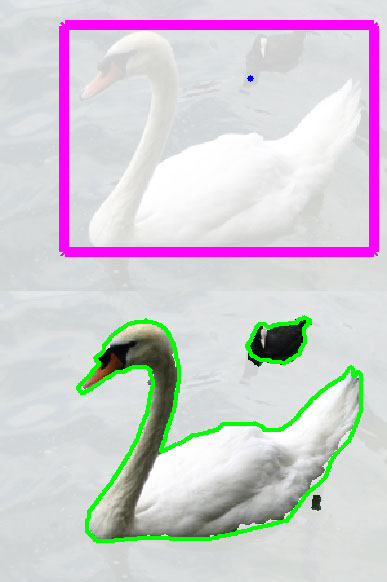}&
 \includegraphics[width=0.22\textwidth]{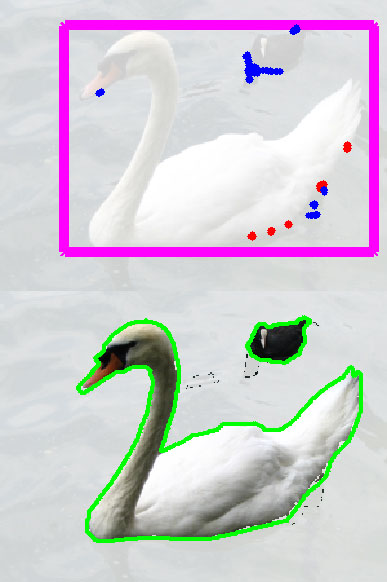}&
 \includegraphics[width=0.22\textwidth]{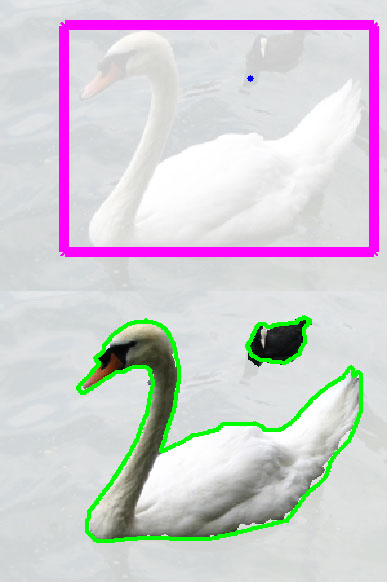}\\
{\scriptsize AdaFBC~\cite{chen2012adaptive}} & {\scriptsize AdaFBC + S-SPGC} & {\scriptsize AdaFBC + Aseg on pixels~\cite{ajayTPAMI12SegFixation}} & {\scriptsize Our method} \\
  \end{tabular}
\end{center}\vspace{-0.2cm}
  \caption{Two sets of results for interactive image segmentation. The top rows show the initial box and the seeds provided by robot user. The images containing seeds have been whitened. The images are better seen by zooming in. Notice that our method produces similar or better results with significant less user effort. 
  }\label{FIG:ExpReg}
\end{figure*}

\subsection{Datasets and evaluation metrics}

\paragraph{Interactive image segmentation.} For evaluating our methods with bounding box input, we adopt the dataset used in~\cite{chen2012adaptive}. It is a subset of the Weizmann segmentation dataset, and it contains 100 images with relatively strong object-background contrast. We use the bounding box provided with the dataset followed by seeds input generated with the robotuser~\cite{kohli2012user} as the user input.

We measure the performance of the methods with segmentation accuracy and the corresponding user effort to achieve the accuracy. The segmentation accuracy is defined as the \emph{overlapping ratio} between the result and the ground truth, i.e. $size(H_o\cap H^*)/size(H_o\cup H^*)$ where $H_o$ is the segmentation result and $H^*$ is the ground-truth segmentation. The \emph{user effort} is measured by the total geodesic distance of the seed points, i.e. the sum of the minimal pairwise distance over the point set. 
In this experiment, the number of superpixels is around 800 for all the images.  This set of experiments were conducted on a PC with Intel Core i5-450M (2.4GHz) processor and 8GB memory.
 
\paragraph{Segmentation propagation in video cutout.} There is one benchmark dataset for interactive video cutout~\cite{Zhong2012UDC_SIGGRAPHAsia}. In the experiment, we evaluate our method on their testing sequences which consists of 6 video sequences with 2070 frames. Since the video cutout task tolerates very little error, we measure the performance of the methods using the \emph{boundary deviation}, i.e. the average distance from the object boundary in the segmentation result and the ground truth object boundary. We also use more superpixels, around 3200, in this experiment. This set of experiments were conducted on a PC with Intel Core i7-4700MQ (2.4GHz) processor and 32GB memory.

\paragraph{Segmentation proposal generation.} 
In this experiment, we use the code shared with \cite{Rantalankila2014GenProposal}. We evaluate on the same test dataset they experimented on, which is part of the PASCAL VOC 2012 segmentation challenge. In the comparison we did not include superpixel refinement even it was proven useful for the task. 
In brief, we directly use the SLIC [cite] in the comparison and replace the pairwise potential of the superpixel MRF constructed in \cite{Rantalankila2014GenProposal} with the pairwise potential superpixelized from Eq. (\ref{EQ:edgeMRF}). We adopt the maximum overlapping ratio of the generated proposal for each object in each image as the evaluation metric. This set of experiments were conducted on a PC with Intel Core i7-4700MQ (2.4GHz) processor and 32GB memory. 

\subsection{Results}

\paragraph{Segmentation with detected edges} 
For this task, we adopt {Adaptive foreground-background classification} (AdaFBC)~\cite{chen2012adaptive} as our foreground-background model. We use the foreground-background probability map produced by AdaFBC combined with feature based superpixel MRF (SSP-GC), active visual segmentation model~\cite{ajayTPAMI12SegFixation} (Aseg), and our method. Due to the page limit, we only present two set of visual results in Fig.~\ref{FIG:ExpReg}, additional results can be found in the supplementary material. Note that our method only requires one dot seed to achieve a satisfactory segmentation in those two examples, while the other methods require either tedious user interactions or produce visually noticeable artifacts. We also present the quantitative results of this experiment in Fig.~\ref{FIG:RegCmp} and the computation time in Table. \ref{TB:TcostIseg}. We can observe that our method is about 400 times faster than the original pixel-level method~\cite{ajayTPAMI12SegFixation}. Our method is also faster than SSP-GC. This is perhaps because the sparse edge map gives good contrast to the MRF weights in our model, which makes the inference much easier and faster.
\begin{figure}
  \centering
  \vspace{0.2cm}
  \includegraphics[width=0.9\columnwidth]{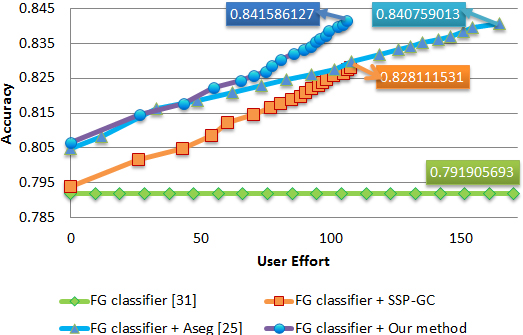}
  \caption{Ground truth comparison of segmentation score v.s. user efforts with initial bounding boxes.}\label{FIG:RegCmp}
\end{figure}

\begin{table}
\caption{Computation time for interactive segmentation (seconds per image).}\label{TB:TcostIseg}
\def\arraystretch{1.2}
{\small\begin{tabular*}{0.95\columnwidth}{@{\extracolsep{\fill} }l|ccccc}
\cline{1-6}
\textbf{method} & \textbf{mean} & \textbf{std} & \textbf{min} & \textbf{median} & \textbf{max} \\ \cline{1-6}
ASeg \cite{ajayTPAMI12SegFixation} &  0.087 & 0.0019 & 0.086 & 0.087 & 0.094 \\
SSP-GC & 0.0081 & 0.0014 & 0.0063 & 0.008 & 0.012  \\
Our method & 0.0026 & 0.00024 & 0.0024 & 0.0026 & 0.0036\\ \cline{1-6}
\end{tabular*}}
\end{table}

\begin{figure*}
\centering
\vspace{0.1cm}
\begin{tabular}{
@{\hspace{0mm}}c@{\hspace{0mm}}c@{\hspace{0mm}}c @{\hspace{0mm}}c
@{\hspace{0mm}}c@{\hspace{0mm}}c@{\hspace{0mm}}c @{\hspace{0mm}}c
@{\hspace{0mm}}c@{\hspace{0mm}}c
}
\begin{sideways}\parbox{19mm}{\centering\scriptsize FBC~\cite{Zhong2012UDC_SIGGRAPHAsia} }\end{sideways}&
          \includegraphics[height = 1.85cm]{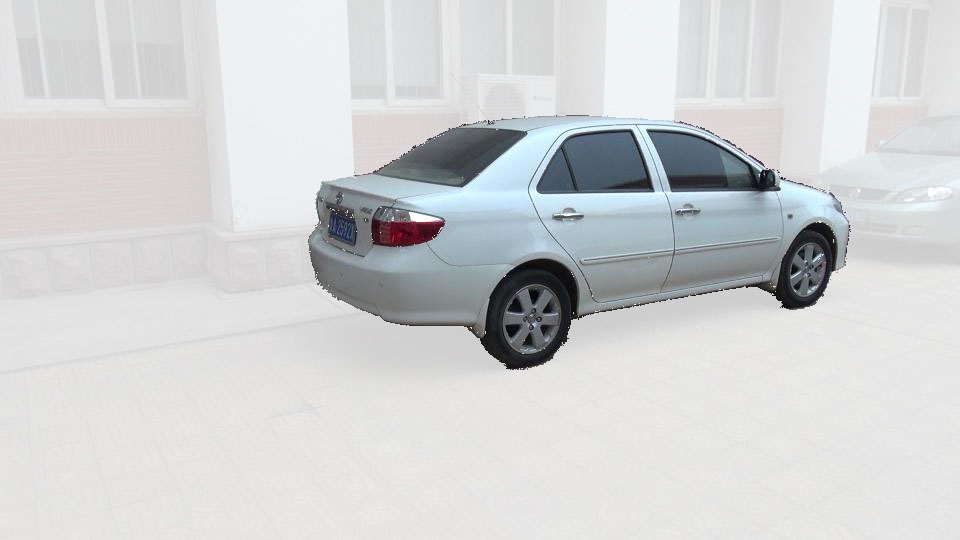}&
          \includegraphics[height = 1.85cm]{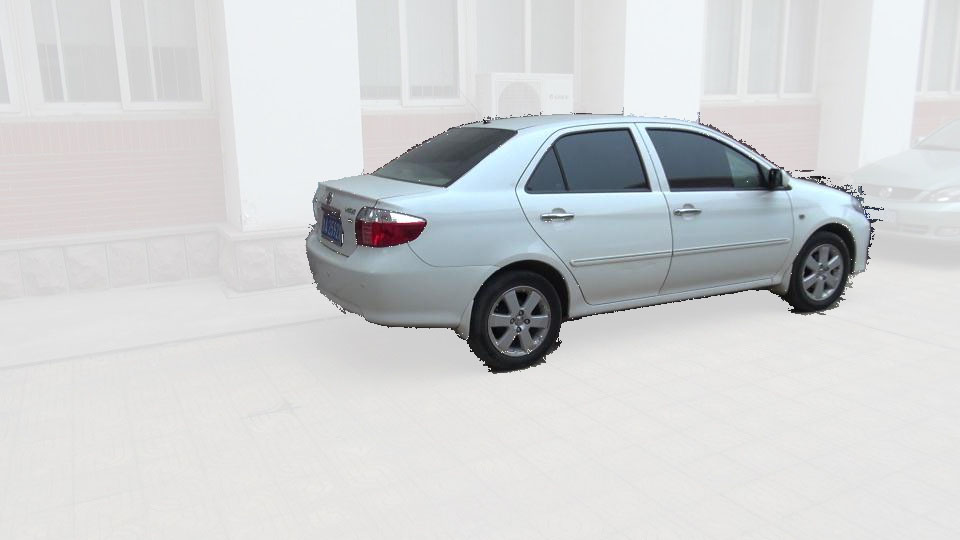}&
          \includegraphics[height = 1.85cm]{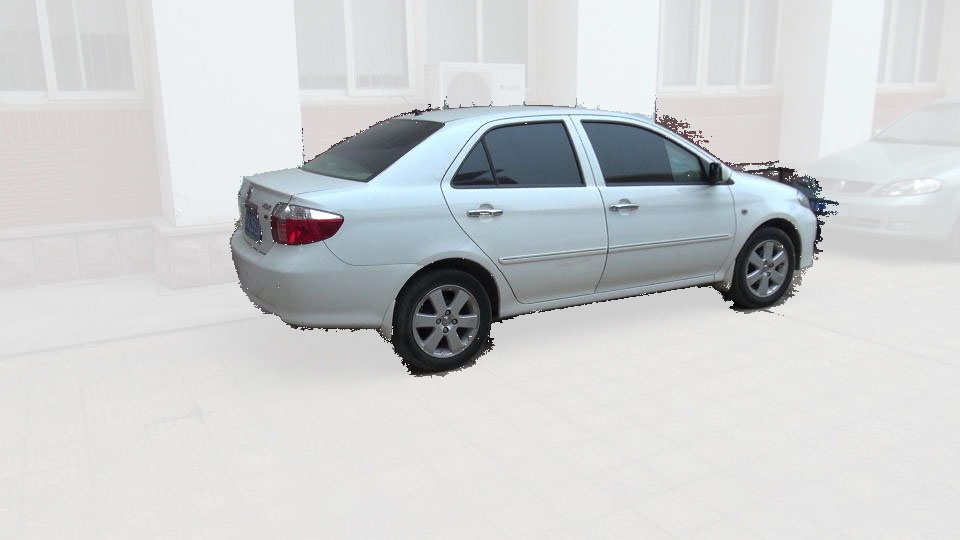}&
          \includegraphics[height = 1.85cm]{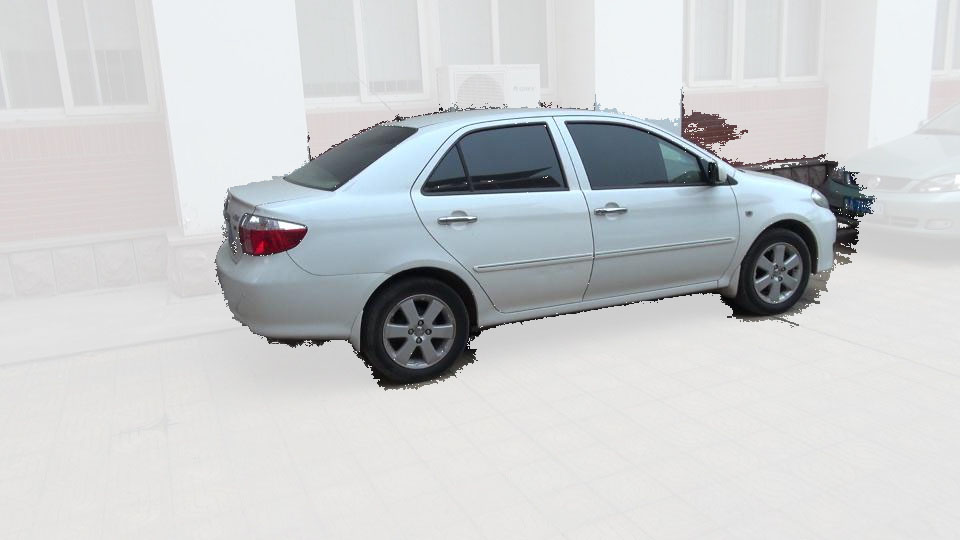}&
         \includegraphics[height = 1.85cm]{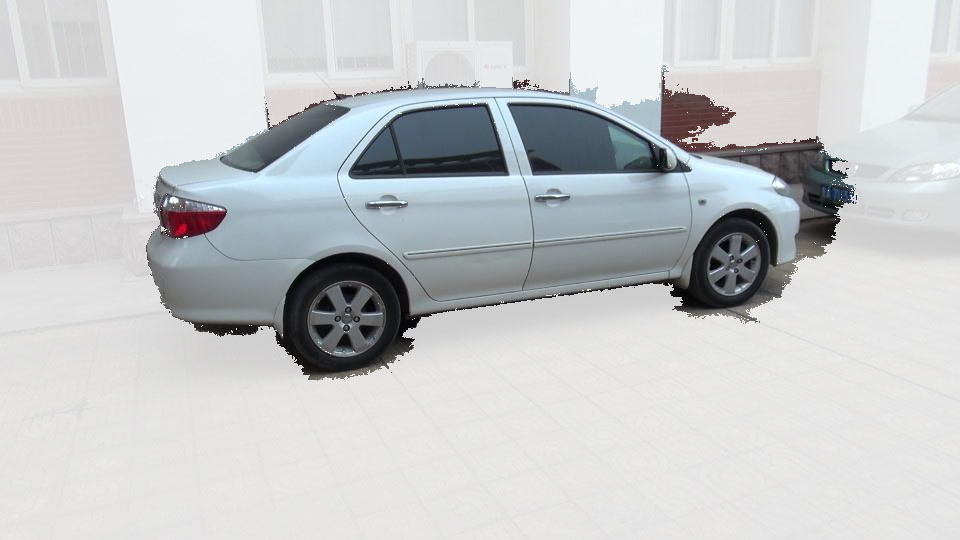}\\
\begin{sideways}\parbox{19mm}{\centering\scriptsize FBC + Matting~\cite{Zhong2012UDC_SIGGRAPHAsia} }\end{sideways}&
          \includegraphics[height = 1.85cm]{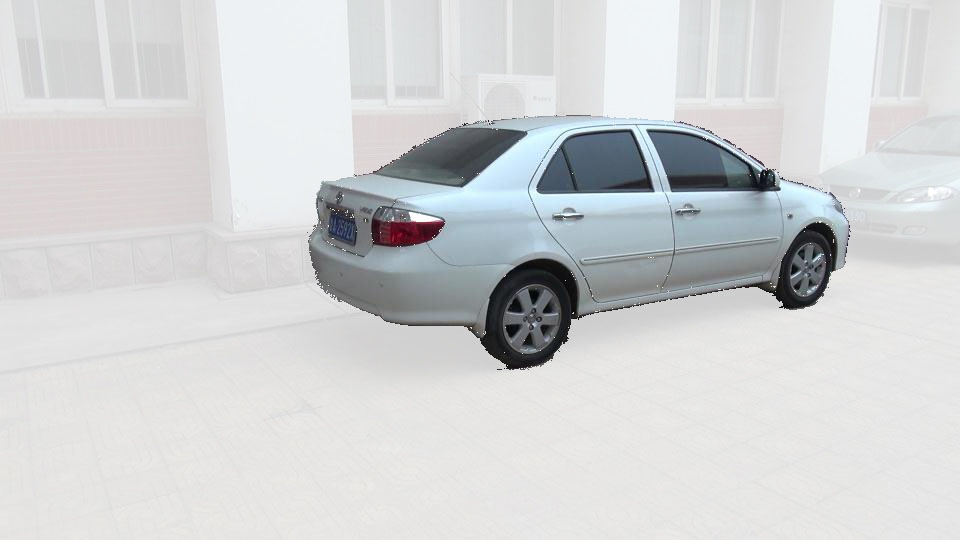}&
          \includegraphics[height = 1.85cm]{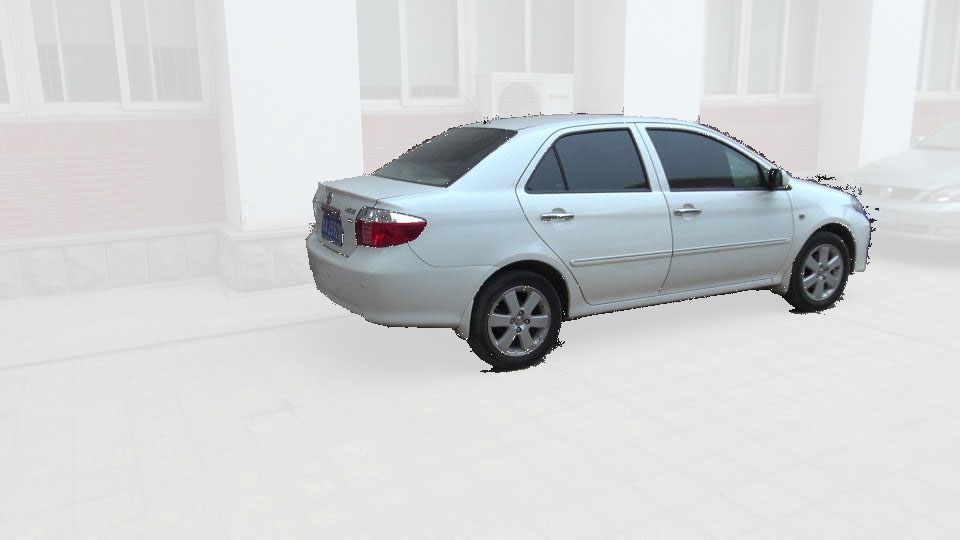}&
          \includegraphics[height = 1.85cm]{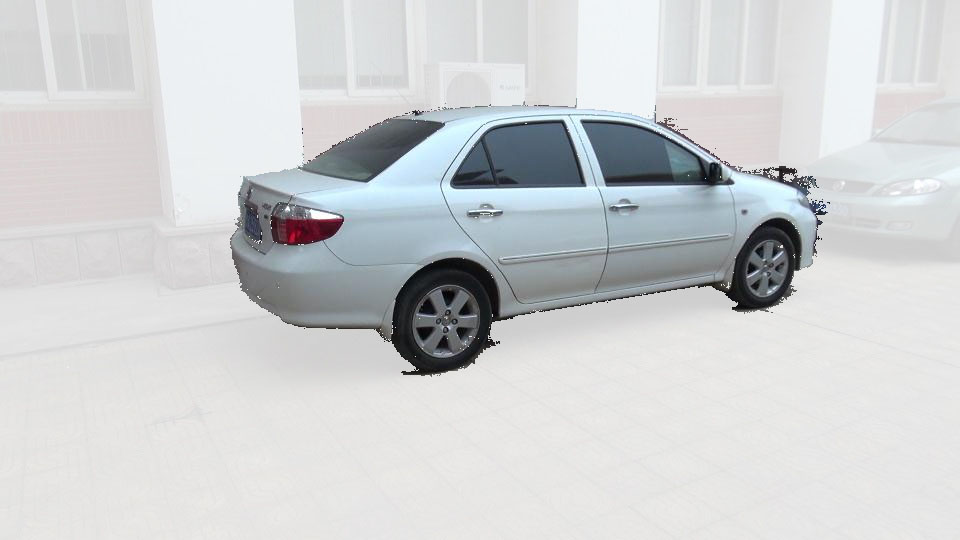}&
          \includegraphics[height = 1.85cm]{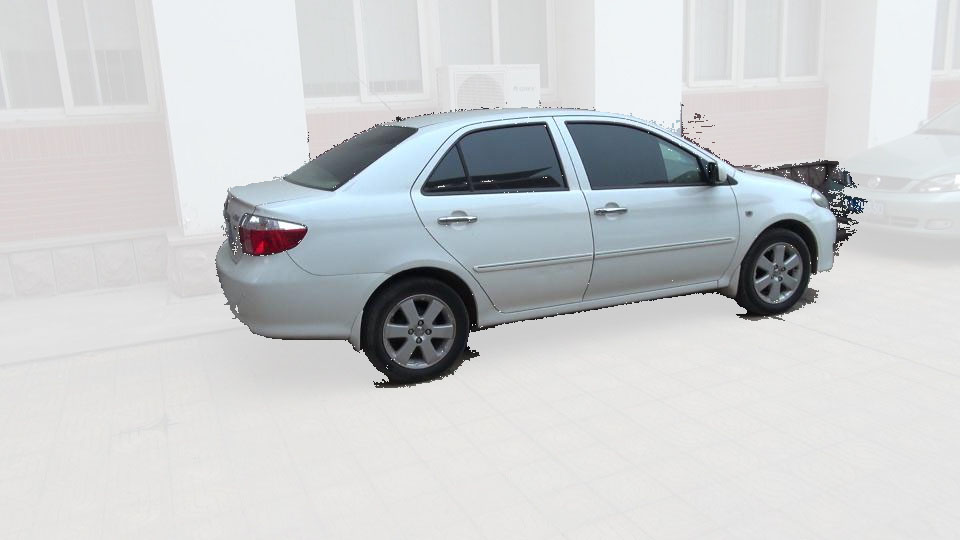}&
         \includegraphics[height = 1.85cm]{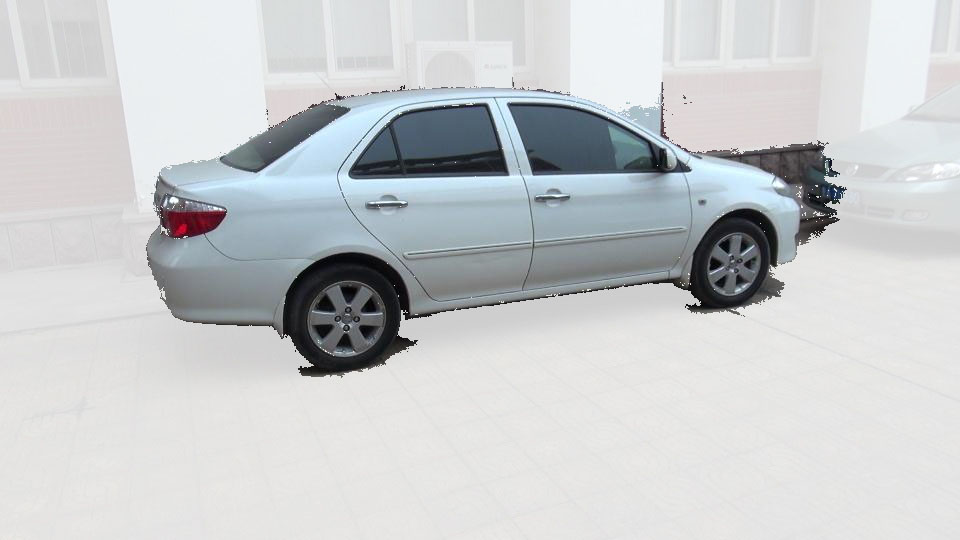}\\
\begin{sideways}\parbox{19mm}{\centering\scriptsize FBC + GC~\cite{Bai09VideoSnapCut_SIGGRAPH} }\end{sideways}&
          \includegraphics[height = 1.85cm]{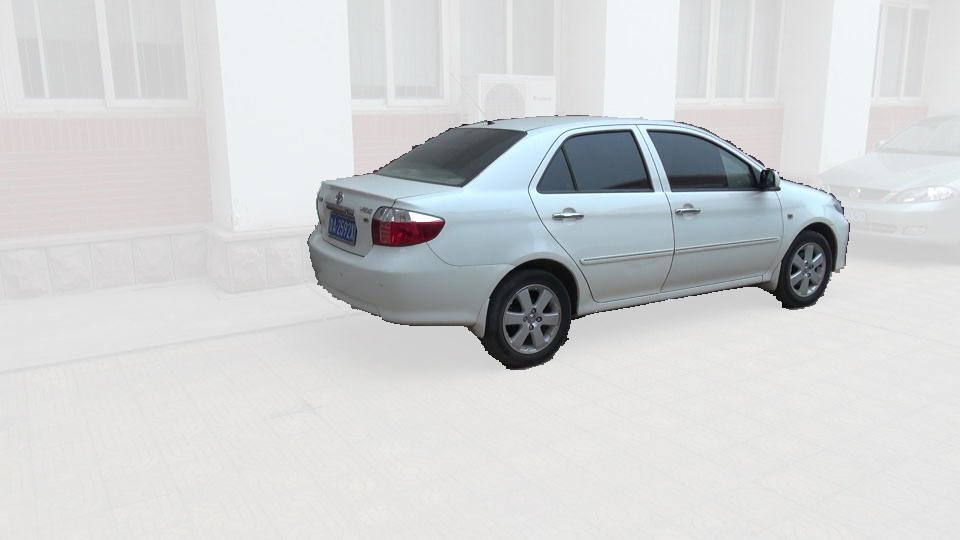}&
          \includegraphics[height = 1.85cm]{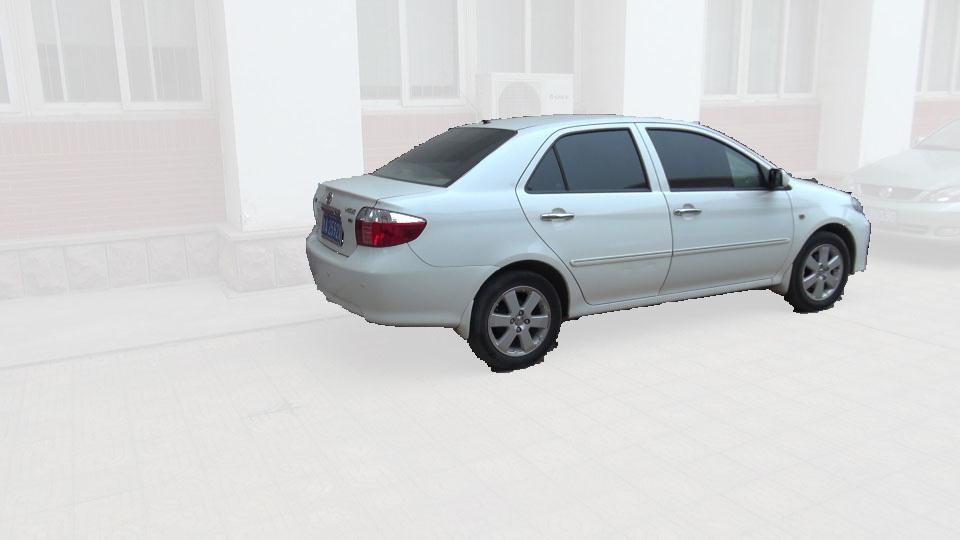}&
          \includegraphics[height = 1.85cm]{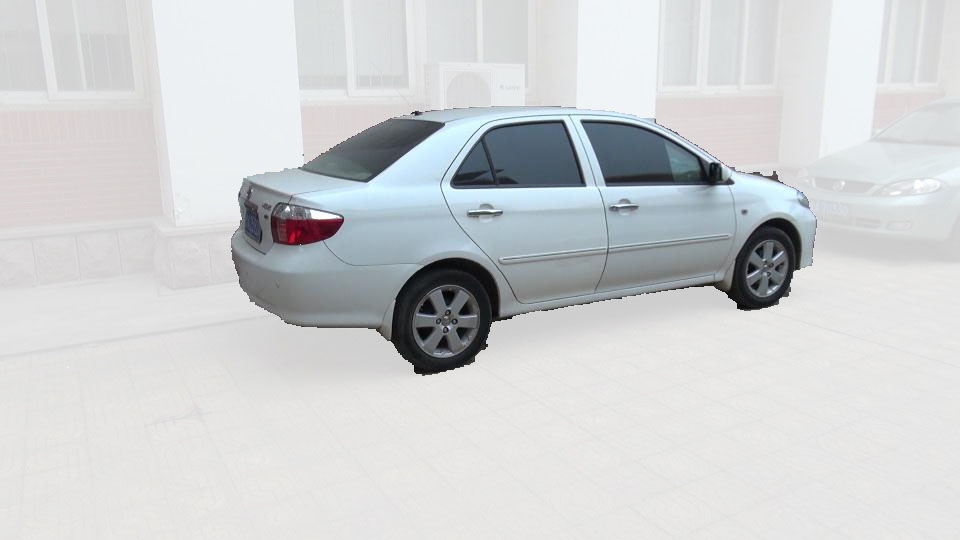}&
          \includegraphics[height = 1.85cm]{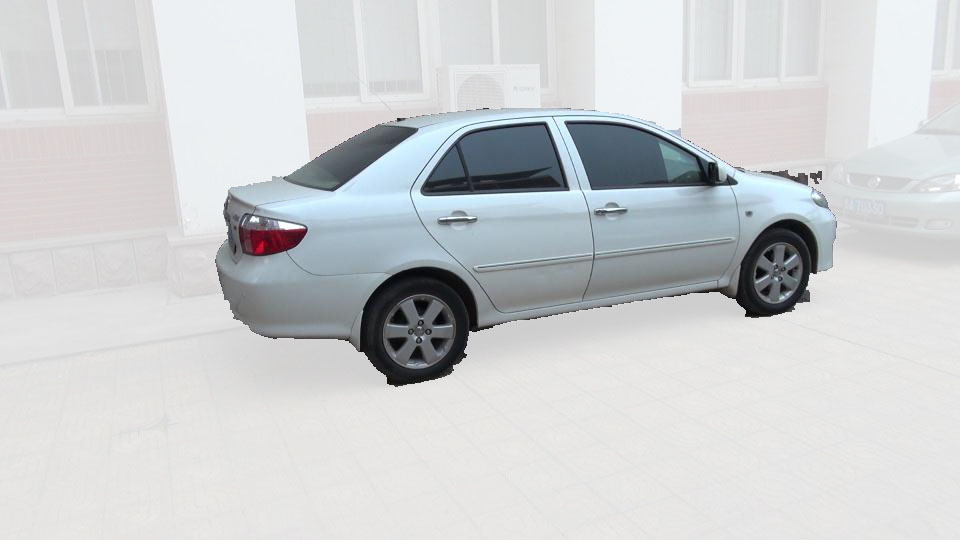}&
         \includegraphics[height = 1.85cm]{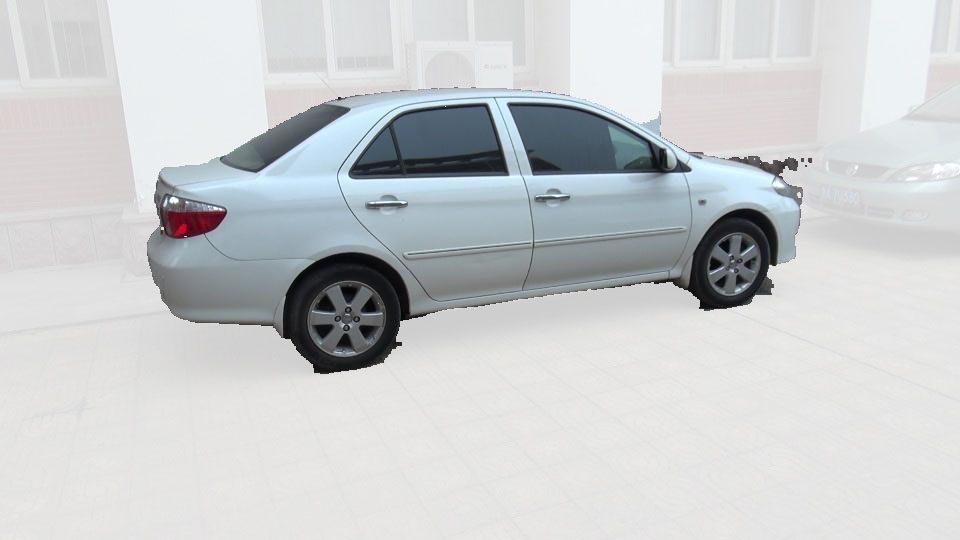}\\
\begin{sideways}\parbox{19mm}{\centering\scriptsize FBC + our method }\end{sideways}&
          \includegraphics[height = 1.85cm]{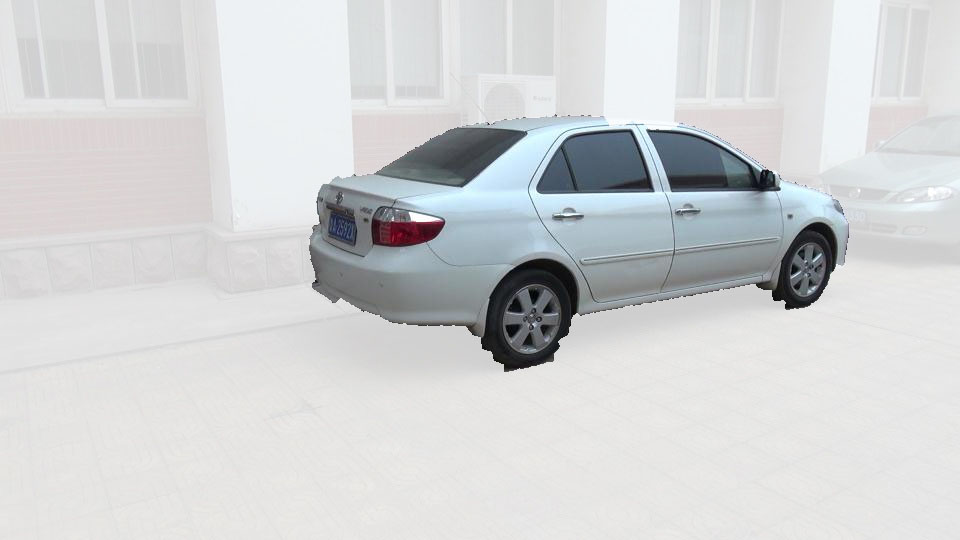}&
          \includegraphics[height = 1.85cm]{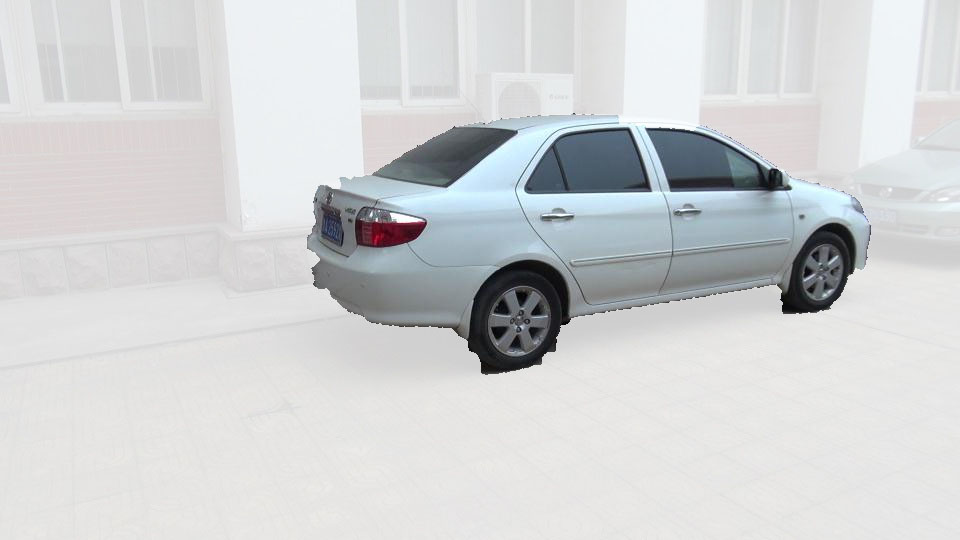}&
          \includegraphics[height = 1.85cm]{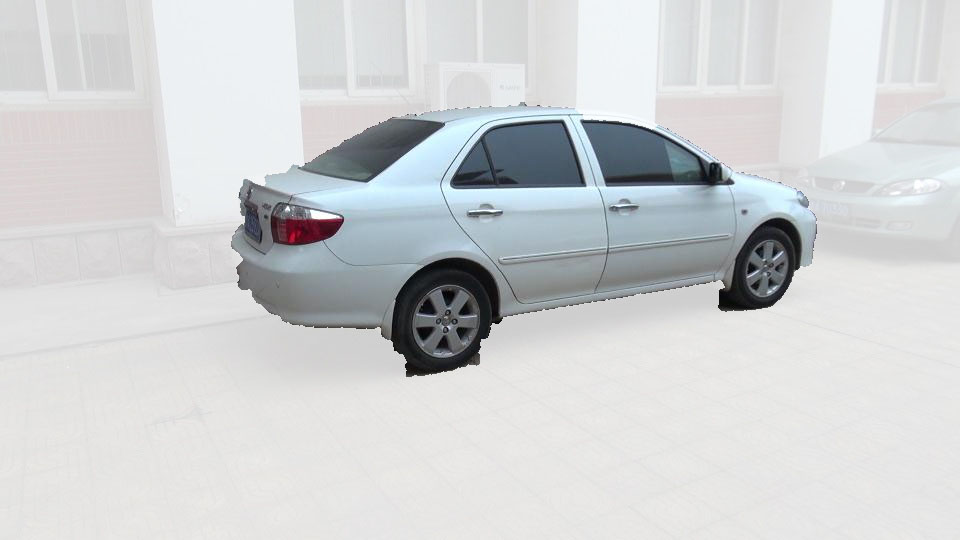}&
          \includegraphics[height = 1.85cm]{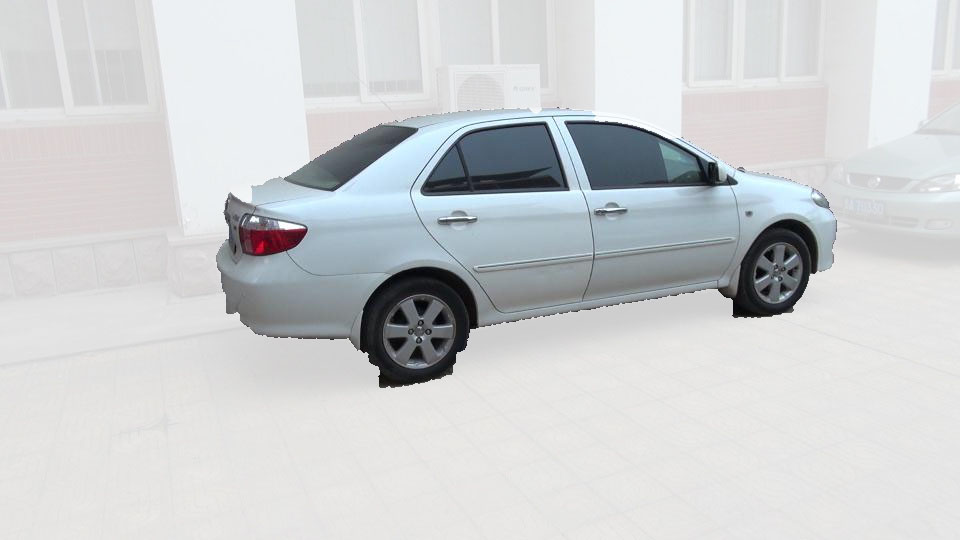}&
         \includegraphics[height = 1.85cm]{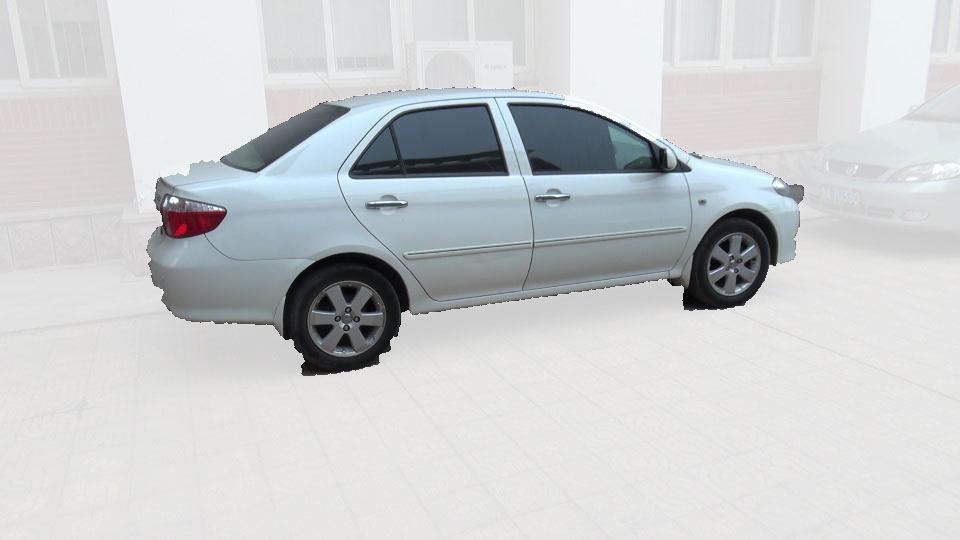}\\
         &Frame 1&Frame 3&Frame 5&Frame 7&Frame 9\\
\end{tabular}          
  \caption{Results of segmentation propagation for the ``Car'' sequence given the segmentation in the first frame. The background is whitened for visualization.}\label{FIG:ExampleSeqCar}
\end{figure*}

\paragraph{Segmentation propagation in video cutout}

In this experiment, we compare our method with the segmentation propagation adopted in \cite{Zhong2012UDC_SIGGRAPHAsia} and \cite{Bai09VideoSnapCut_SIGGRAPH}. The latter is known as Rotobrush in Adobe After Effect. We adopt the state-of-the art foreground-background classifier proposed in \cite{Zhong2012UDC_SIGGRAPHAsia} to form the unary term in the MRF. While Zhong et al. \cite{Zhong2012UDC_SIGGRAPHAsia} adopted matting for segmentation propagation, the Rotobrush uses graph cuts to solve a conventional MRF based segmentation model. In our implementation, we still adopt the model proposed in~\cite{ajayTPAMI12SegFixation} for this task. We present one set of visual results in Fig.~\ref{FIG:ExampleSeqCar}, more results can be found in the supplementary materials. The quantitative results are summarized in Fig.~\ref{FIG:ErrAcum}. From the results, we can observe that our method achieves the state-of-the-art segmentation propagation results. The advantage of our method lies in the computational efficiency, as tabulated in Tab. \ref{TAB:TcostVseg}.

\begin{figure}[!h]
\centering
\includegraphics[width = 0.9\columnwidth]{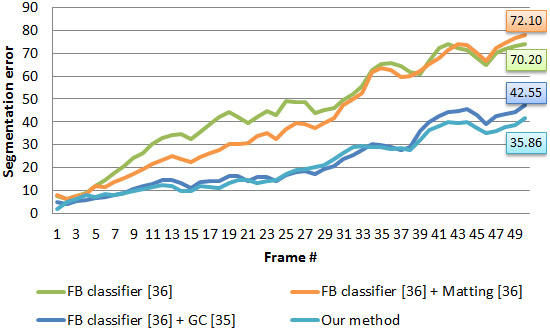}
\caption{Error accumulation in segmentation propagation.}\label{FIG:ErrAcum} \vspace{-0pt}
\end{figure}
\begin{table}[!h]
\vspace{0.1cm}
\caption{Computation time for segmentation propagation (seconds per frame).}\label{TAB:TcostVseg}
\def\arraystretch{1.2}
{\small\begin{tabular*}{0.95\columnwidth}{@{\extracolsep{\fill} }l|ccccc}
\cline{1-6}
\textbf{method} & \textbf{mean} & \textbf{std} & \textbf{min} & \textbf{median} & \textbf{max} \\ \cline{1-6}
Matting~\cite{Zhong2012UDC_SIGGRAPHAsia}
 & 0.96 &     0.3 &    0.42 &   0.92  &    1.8\\ 
GC~\cite{Bai09VideoSnapCut_SIGGRAPH}  & 0.38 &    0.12 &    0.22 &   0.35  &   0.73\\ 
Our method & 0.003 &  0.0002 &  0.0026 & 0.0029  & 0.0036\\ \cline{1-6}
\end{tabular*}}
\end{table}

\paragraph{Automatic segmentation proposal generation}
\begin{figure}
\centering
\begin{tabular}{
@{\hspace{0mm}}c@{\hspace{0mm}}c@{\hspace{0mm}}c @{\hspace{0mm}}c
@{\hspace{0mm}}c@{\hspace{0mm}}c@{\hspace{0mm}}c @{\hspace{0mm}}c
@{\hspace{0mm}}c@{\hspace{0mm}}c
}
\begin{sideways}\parbox{2.5cm}{\centering\scriptsize LGS \cite{Rantalankila2014GenProposal}}\end{sideways}&
          \includegraphics[height = 2.5cm]{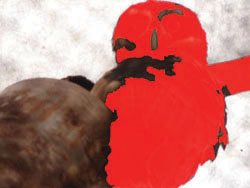}&
          \includegraphics[height = 2.5cm]{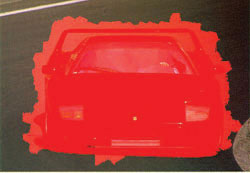}&
          \includegraphics[height = 2.5cm]{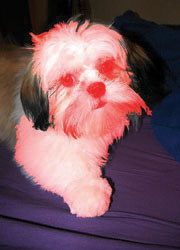}&
          \includegraphics[height = 2.5cm]{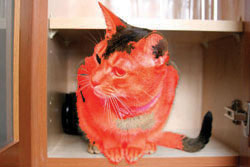}&
          \includegraphics[height = 2.5cm]{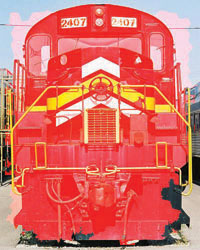}\\
\begin{sideways}\parbox{2.5cm}{\centering\scriptsize Our method }\end{sideways}&
          \includegraphics[height = 2.5cm]{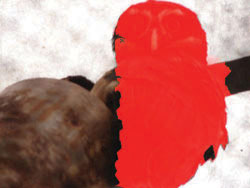}&
          \includegraphics[height = 2.5cm]{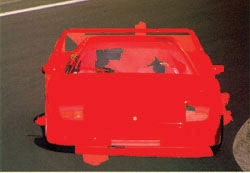}&
          \includegraphics[height = 2.5cm]{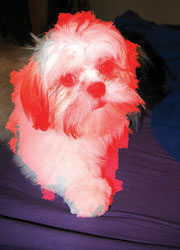}&
          \includegraphics[height = 2.5cm]{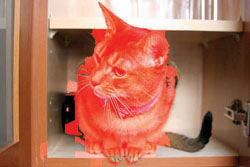}&
          \includegraphics[height = 2.5cm]{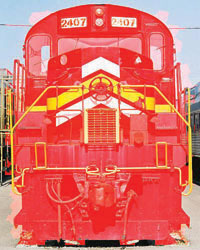}\\
\end{tabular}          
  \caption{Comparison of the maximum overlapping proposals}\label{FIG:ExampleProposalGen}
\end{figure}

\begin{figure}
  \centering
  \includegraphics[width=0.9\columnwidth]{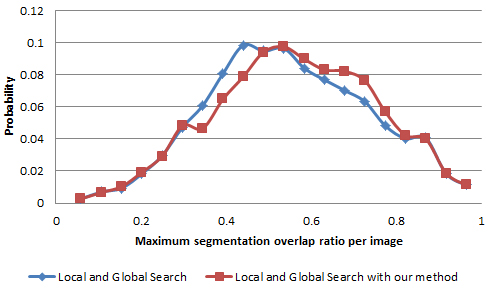}\\
  \caption{Performance of segmentation proposal generation. }\label{FIG:ProposalGen}
\end{figure}
Again, we only present some results of the segmentation proposal generation in Fig. \ref{FIG:ExampleProposalGen} due to the limit in paper length, additional results are in the supplementary materials. The visual results suggest that the results of our method better adheres to the object boundaries compared to the local and global search (LGS) method~\cite{Rantalankila2014GenProposal}. The quantitative results are shown in Fig. \ref{FIG:ProposalGen}. From the quantitative results we can observe that our method generates proposals of high accuracy at a higher probability. Table~\ref{TAB:TcostProposalGen} compare the computation time which is similar since both~\cite{Rantalankila2014GenProposal} and our method run on the superpixel space. 


\begin{table}[htb]
\vspace{0.1cm}
\caption{Computation time for segmentation proposal generation (seconds per image).}\label{TAB:TcostProposalGen}
\def\arraystretch{1.2}
\centering
{\small\begin{tabular*}{1\columnwidth}{@{\extracolsep{\fill}}l|ccccc}
\cline{1-6}
\textbf{method} & \textbf{mean} & \textbf{std} & \textbf{min} & \textbf{median} & \textbf{max} \\ \cline{1-6}
LGS~\cite{Rantalankila2014GenProposal} &
 10.46 & 2.19 & 1.38 & 10.59 & 20.33 \\ 
Our method & 9.37 & 1.93 & 1.26 & 9.81 &20.43\\ \cline{1-6}
\end{tabular*}}
\end{table}
\vspace{-3mm}

\if 0
\paragraph{Connections to the CRF model for segmentation}
In the CRF model for segmentation \cite{Kumar06DRF}, the pairwise potential is generally defined with a built-in edge detector in the form of linear classifier as follows:
\begin{equation}
V_{crf}^{pq}(H_p,H_q) = \sigma(\mathbf{w}_e\cdot\mathbf{f}^{pq}_e\overline{H}_p\overline{H}_q)
\end{equation}
where $\overline{H} = 2(H-0.5)$, and $\sigma$ is a sigmoid mapping defined as $\sigma(x)={1\over1+\exp(-x)}$. The value of the potential $V_{crf}^{pq}(H_p,H_q)$ will be large if the labeling $\overline{H}_p\overline{H}_q$ has the same sign with $\mathbf{w}_e\cdot\mathbf{f}^{pq}_e$. Thus, the two terms has to be in different signs. This enforces $\mathbf{w}_e\cdot\mathbf{f}^{pq}_e$ to be an edge detector, and $\mathbf{w}_e\cdot\mathbf{f}_{pq}>0$ indicates an edge.

Accordingly, we can equivalently write $V_{crf}^{pq}(H_p,H_q)$ in the form of our model defined in (\ref{EQ:EdgeCut})as $V_{crf}^{pq}(H_p,H_q)=(1-\sigma(\mathbf{w}_e\cdot\mathbf{f}_{pq}))|{H}_p-{H}_q|$. We can verify that if $H_p\neq H_q$, indicating $pq$ to be an edge, and $\mathbf{w}_e\cdot\mathbf{f}^{pq}_e$ has to be greater than zero to ensure $V_{crf}^{pq}(H_p,H_q)$ to be small.
\fi


\section{Conclusion and future work}\label{SEC:Concl}
In this paper, we propose a technique to convert the generic binary MRF defined on pixels to binary MRF defined on superpixels, which we called superpixelization of MRF. The resultant model remains submodular if the original model is submodular. We applied the technique to several computer vision problems, and we either outperform the state-of-the-art at similar computational cost or we achieve the state-of-the-art at significantly smaller computational cost. Our technique is also potentially useful in solving non-submodular energy or multi-label problems and it is ready for extending to voxel labeling.


\section*{Appendix}
\renewcommand\thesubsection{\Alph{subsection}}
\setcounter{subsection}{0}
\renewcommand{\theequation}{A-\arabic{equation}}
\setcounter{equation}{0}  

\begin{proof}[Proof of lemma \ref{LM:uniq_SP}]
Let's consider $f_p$ defined in Eq. (\ref{EQ:SPlabel}). We will have 
\begin{equation}
f_p = f_p\chi^l_p,\hbox{ for } p\in{\Omega}_l.
\end{equation}
Substituting Eq. (\ref{EQ:SPlabel}) into the above, we will have for any $p\in{\Omega}_l$
\begin{equation}
f_p = \sum_{k=1}^{K} x_k \underbrace{\chi^k_p\chi^l_p}_{=0,~if~k\neq l}= x_l \chi^l_p.
\end{equation}
Note that $\chi^l_p=1$ for any $p\in{\Omega}_l$, $f_p = x_l$. This completes the proof.\qed
\end{proof}

\begin{proof}[Proof of corollary \ref{Col:invL}]
According to Lemma \ref{LM:uniq_SP}, we have for any $p\in{\Omega}_l$, $f_p =x_l$. Thus $\overline{f_p} = \overline{x_{l}} =\overline{x_{l}}\chi^{l}_p$, for any $p\in{\Omega}_l$. Thus for all $p\in\mathcal{P}$, we will have $\overline{f_p}=\sum_{k=1}^{K}\overline{x_k}\chi^k_p$. \qed
\end{proof}

\begin{proof}[Proof of Proposition \ref{PROP:main}]
We may start by expanding Eq. (\ref{EQ:PMRF_SP}). Accordingly, the unary term in Eq. (\ref{EQ:MRF}) can be rewritten using $x_k$:
\begin{equation}
\hspace{-10pt}E^1 = \sum_p w_p f_p = \sum_{k=1}^K\left(\sum_p w_p\chi^k_p\right) x_k=\sum_{k=1}^K \omega_kx_k,
\end{equation}
where $\omega_k = \sum_p w_p\chi^k_p$.

To superpixelize the pairwise potential of the MRF energy in Eq. (\ref{EQ:MRF}), we need to superpixelize the four pairwise terms: $\sum\limits_{(p,q)\in \mathcal{N}}  w^{00}_{pq}\overline{f_p}\overline{f_q}$, $\sum\limits_{(p,q)\in \mathcal{N}} w^{01}_{pq}\overline{f_p}{f_q}$,  $\sum\limits_{(p,q)\in \mathcal{N}}w_{pq}^{10}{f_p}\overline{f_q}$, $\sum\limits_{(p,q)\in \mathcal{N}}w_{pq}^{11}{f_p}{f_q}$.

Therefore, we have the following identities.
\begin{equation}
\begin{split}
&\sum_{(p,q)\in \mathcal{N}}  w^{00}_{pq}\overline{f_p}\overline{f_q}\\
=& \sum_{(p,q)\in \mathcal{N}} w^{00}_{pq}\sum_{k=1}^{K} \overline{x_k} \chi_p^k \sum_{l=1}^{K} \overline{x_l}\chi_q^l\\
=& \sum_{k=1,l=1}^{K,K} \left(\sum_{(p,q)\in \mathcal{N}} w^{00}_{pq} \chi_p^k \chi_q^l\right)\overline{x_k}\overline{x_l}\\
=& \sum_{k=1}^K\omega^{00}_k\overline{x_k} + \sum_{\{k=1,l=1|k\neq l\}}^{K,K} \omega_{kl}^{00}\overline{x_k}\overline{x_l}\\
=& \sum_{k=1}^K-\omega^{00}_k{x_k} + \hspace{-0.5cm}\sum_{\{k=1,l=1|k\neq l\}}^{K,K} \omega_{kl}^{00}\overline{x_k}\overline{x_l}+\sum_{k=1}^K\omega^{00}_k,\\
\end{split}
\end{equation}
where $\omega_{kl}^{00} = \sum\limits_{(p,q)\in \mathcal{N}} w^{00}_{pq} \chi_p^k \chi_q^l$, and $\omega_{k}^{00} = \sum\limits_{(p,q)\in \mathcal{N}} w^{00}_{pq} \chi_p^k \chi_q^k$. Note that $\chi_p^k \chi_q^l=1$ only when the neighboring $p,q$ are in two neighboring superpixels, i.e.  $p\in\Omega_k$ and $q\in\Omega_l$ for $k\neq l$.
\vspace{2pt}

Likewise,
\begin{equation}
\sum_{(p,q)\in \mathcal{N}} w^{01}_{pq}\overline{f_p}{f_q} = \sum_{\{k=1,l=1|k\neq l\}}^{K,K} \omega_{kl}^{01}\overline{x_k}{x_l}~~~~~~~~~~
\end{equation}
\begin{equation}
\sum_{(p,q)\in \mathcal{N}} w^{10}_{pq}{f_p}\overline{f_q} = \sum_{\{k=1,l=1|k\neq l\}}^{K,K} \omega_{kl}^{10}{x_k}\overline{x_l}~~~~~~~~~~
\end{equation}
\begin{equation}
\sum_{(p,q)\in \mathcal{N}} w^{11}_{pq}{f_p}{f_q} =\sum_{k=1}^{K} \omega_{k}^{11}{x_k}+\sum_{\{k=1,l=1|k\neq l\}}^{K,K} \omega_{kl}^{11}{x_k}{x_l},
\end{equation}
where $\omega_{kl}^{mn} = \sum\limits_{(p,q)\in \mathcal{N}} w^{mn}_{pq} \chi_p^k \chi_q^l$, $\omega_{k}^{mn} = \sum\limits_{(p,q)\in \mathcal{N}} w^{mn}_{pq} \chi_p^k \chi_q^k$, where $(m,n)\in \{0,1\}$. Note that there are no linear terms for $m=1,~n=0$ and $m=0,~n=1$, since $x_k\overline{x_k}=0$.
\vspace{2pt}

To sum up, the superpixelized MRF energy can be rewritten as
\begin{equation}
\begin{split}
&E^1+E^2\\
=& \sum_{k=1}^K \hat{\omega}_kx_k+\sum_{k\neq l}^{K,K}\left( \omega_{kl}^{00}\overline{x_k}\overline{x_l}+\omega_{kl}^{01}{x_k}\overline{x_l}\right.\\ 
&\hspace{2.5cm}\left.+\omega_{kl}^{10}{x_k}\overline{x_l}+\omega_{kl}^{11}{x_k}{x_l}\right)\\
&+C,
\end{split}
\end{equation}
where $C$ is a constant independent of $x_k$, $\hat{\omega}_k = \omega_k - \omega^{00}_{k} + \omega^{11}_{k} $, and the remaining variables are defined as before. The resultant form turns out to be analogous to the original pixel-level MRF. Note that $\omega^{mn}_{k}$ is treated as $0$ for $m=1,~n=0$ and $m=0,~n=1$, since $x_k\overline{x_k}=0$.
\qed
\end{proof}

\begin{proof}[Proof of corollary \ref{Col:V}]
First, we can take summation of $V_{pq}$ over the neighborhood defined by $\mathcal{N}_{kl}=\{\{p,q\}\in \mathcal{N}|p\in\Omega_k, q\in\Omega_l,k\neq l\}$ to arrive at the following:
\begin{equation}
\begin{split}
\hspace{-0.3cm}&\sum_{\{p,q\}\in\mathcal{N}_{kl}}V_{pq}\\
\hspace{-0.2cm}&=\hspace{-0.3cm}\sum_{\{p,q\}\in\mathcal{N}_{kl}}\hspace{-0.2cm}w^{00}_{pq}\overline{f_p}\overline{f_q}+w^{01}_{pq}\overline{f_p}{f_q}+w^{10}_{pq}{f_p}\overline{f_q}+w^{11}_{pq}{f_p}{f_q}\\
\end{split}
\end{equation}
According to lemma \ref{LM:uniq_SP} and corrollary \ref{Col:invL}, the above can be written as:
\begin{equation}\label{EQ:intermV}
\begin{split}
\hspace{-0.3cm}&\sum_{\{p,q\}\in\mathcal{N}_{kl}}V_{pq}\\
\hspace{-0.2cm}&=\hspace{-0.3cm}\sum_{\{p,q\}\in\mathcal{N}_{kl}}\hspace{-0.2cm}w^{00}_{pq}\overline{x_k}\overline{x_l}+w^{01}_{pq}\overline{x_k}{x_l}+w^{10}_{pq}{x_k}\overline{x_l}+w^{11}_{pq}{x_k}{x_l}\\
\end{split}
\end{equation}

From proposition \ref{PROP:main}, we know that
\begin{equation}
\omega^{mn}_{kl} = \sum_{\{p,q\}\in\mathcal{N}_{kl}}w_{pq}^{mn}.
\end{equation}
By substituting the above into Eq. (\ref{EQ:intermV}), we obtain the LHS of Eq. (\ref{EQ:PPafSP}) which complete the prove.\qed
\end{proof}

\begin{proof}[Proof of proposition \ref{Prop:Regularity}]
Let us multiply each term of Eq. (\ref{EQ:regineq}) with $\chi_p^k\chi_q^l$, which is non-negative. We will have for any $(p,q)\in\mathcal{N}$,
\begin{equation}
w^{00}_{pq}\chi_p^k\chi_q^l+w^{11}_{pq}\chi_p^k\chi_q^l\leq w^{01}_{pq}\chi_p^k\chi_q^l+w^{10}_{pq}\chi_p^k\chi_q^l.
\end{equation}
If we further sum each term over all the $(p,q)\in\mathcal{N}$ together, we will have
\begin{equation}
\begin{split}
&\sum\limits_{(p,q)\in \mathcal{N}}(w^{00}_{pq}\chi_p^k\chi_q^l+w^{11}_{pq}\chi_p^k\chi_q^l)\\
&\leq \sum\limits_{(p,q)\in \mathcal{N}}(w^{01}_{pq}\chi_p^k\chi_q^l+w^{10}_{pq}\chi_p^k\chi_q^l).
\end{split}
\end{equation}
By definition of $\omega^{00}_{kl}$,~$\omega^{11}_{kl}$, $\omega^{01}_{kl}$, and $\omega^{10}_{kl}$, the above completes the proof.\qed
\end{proof}

\balance
{\small
\bibliographystyle{IEEEtran}
\bibliography{MRFseg,LowLevelVision,LevelSetActiveContours,OtherSeg,VideoSeg}
}

\newpage

\end{document}